\newcolumntype{L}{>{$}l<{$}} 
\DeclareMathOperator*{\argmax}{arg\,max}
\newcommand{\vertiii}[1]{{\left\vert\kern-0.25ex\left\vert\kern-0.25ex\left\vert #1 
		\right\vert\kern-0.25ex\right\vert\kern-0.25ex\right\vert}}
\titleformat*{\section}{\Large\bfseries\raggedright}
\numberwithin{equation}{section}
\newtheorem{lemma}{Lemma}
\newtheorem{definition}{Definition}
\newtheorem{theorem}{Theorem}
\numberwithin{lemma}{section}
\numberwithin{theorem}{section}
\numberwithin{definition}{section}
\numberwithin{remark}{section}
\newtheorem{corollary}{Corollary}
\numberwithin{corollary}{section}
	\title{A Lower Bound for the Sample Complexity of Inverse Reinforcement Learning}
\date{}
\author[1]{\textbf{Abi Komanduru}}
\author[1]{\textbf{Jean Honorio}}
\affil[1]{Purdue University, West Lafayette IN 47906}
\begin{document}

\maketitle
\begin{abstract}
	Inverse reinforcement learning (IRL) is the task of finding a reward function that generates a desired optimal policy for a given Markov Decision Process (MDP). This paper develops an information-theoretic lower bound for the sample complexity of the finite state, finite action IRL problem. A geometric construction of $\beta$-strict separable IRL problems using spherical codes is considered. Properties of the ensemble size as well as the Kullback-Leibler divergence between the generated trajectories are derived. The resulting ensemble is then used along with Fano's inequality to derive a sample complexity lower bound of $O(n \log n)$, where $n$ is the number of states in the MDP.
\end{abstract}

\section{Introduction}

\pagenumbering{gobble}
Reinforcement learning (RL) focuses on the well studied problem of finding an optimal policy for a given Markov Decision Process (MDP) with a known reward function. Inverse Reinforcement Learning (IRL) \cite{IRLNg} considers a MDP with known optimal policy and aims to find a reward function that generates the desired optimal policy. It is well known that the choice of such reward function is not necessarily unique. The IRL problem occurs in situations where the actions of an \textit{expert}, which represent the optimal policy, are known or can be observed and are to be replicated through the proper choice of a reward function. Examples include cases such as apprenticeship learning. 

Two major formulations of the IRL problem have been proposed. The first is the standard MDP formulation considered by \cite{IRLNg}. This is the formulation that is considered in this paper and for which the results are derived. The second is the linearly-solvable MDP (LMDP) formulation of \cite{LMDP2010Dvi}. As noted in \cite{LMDP2010Dvi}, while the standard MDP problem can be embedded in LMDP, solutions to standard MDP problems based on standard MDPs are guaranteed to generate the desired Bellman optimal policy given the true transition probabilities whereas the LMDP formulation does not. Both formulations are used successfully in practice. Methods to solve the standard MDP formulations include the methods presented in \cite{IRLNg}, \cite{AbbeelNg}, Multiplicative Weights for Apprenticeship Learning \cite{syed2008apprenticeship}, Bayesian Estimation IRL \cite{RamaBayes}, Maximum Margin Planning \cite{ratliff2006maximum}, Hybrid IRL \cite{neu2007apprenticeship} and the L1 SVM formulation\cite{komanduru2019correctness}. Examples of methods for solving the LMDP formulation are Maximum-Entropy IRL \cite{ziebart2008maximum} and Gaussian Process IRL \cite{levine2011nonlinear}.

As shown in our previous work \cite{komanduru2019correctness}, various solutions to the standard MDP problem can fail to result in a reward that uniquely generates the desired optimal policy. This failure can render such reward function solutions useless in the case where the goal is to replicate the policy of the expert and not just simply achieve similar values for the value function. With this in mind, in  \cite{komanduru2019correctness} we derived an upper bound for the sample complexity of the inverse reinforcement learning problem. In this paper, we derive an information-theoretic lower bound for the sample complexity of the standard MDP IRL problem when the transition probabilities are estimated from observed trajectories. The derived sample complexity is with respect to the goal of recovering a reward function that correctly generates the desired optimal policy.

 We use Fano's inequality \cite{cover2006elements} to prove our result through a careful construction of an ensemble. The use of restricted ensembles is customary for information-theoretic lower bounds \cite{santhanam2012information}, \cite{wang2010information}, \cite{tandon2014information}. To the best of our knowledge, no such information-theoretic sample complexity lower bound exists for the recovery of the reward function with the stated properties in the case of standard MDP IRL.

In the following section, we review the basic notation of the Inverse Reinforcement Learning problem, the conditions for Bellman optimality of the optimal policy for standard MDP and the notion of $\beta$-strict separability. In Section \ref{sec:GeoCon}, we describe the geometric construction of the ensemble using spherical codes. Section \ref{sec:samp_comp} derives bounds for the cardinality and the KL divergence within the ensemble and culminates with the $O(n\log n)$ lower bound for the sample complexity of inverse reinforcement learning. Section \ref{sec:exp} provides results from simulated experiments using various solutions methods to support our sample complexity bound. 

\section{Preliminaries and Notation}
Consider the standard Markov Decision Process  $(S,A,\lbrace P_{a}\rbrace, \gamma, R)$, where
~\\
\begin{itemize}
	
	\item $S$ is a finite set of $n$ states.
	\item $A = \lbrace a_1, \ldots, a_k\rbrace$ is a set of $k$ actions.
	\item $ P_{a} \in \left[0,1\right]^{n\times n} $ are the state transition probabilities for action $a$. We use $ P_{a}(i) \in \left[0,1\right]^{n}$ to represent the state transition probabilities for action $a$ in the $i$-th state or more simply, the $i$-th row of the transition probability matrix $P_a$
	\item $\gamma \in [0,1]$ is the discount factor.
	\item $R: S\to\mathbb{R}$ is the reward function.
\end{itemize}

In our representation, we consider the $P_{a}$ to be right stochastic matrices, i.e.,

$P_a(i,j) \ge 0 \; \forall \; i,j \;\;\; \text{and}\;\;\;\sum_{j} P_a (i,j) = 1 \; \forall i$

where $P_a (i,j)$ are the entries of $P_a (i)$ and represent the transition probability of going from state $i$ to state $j$ when taking action $a$.

We assume the reward function to depend only on the state instead of the state and the action. This assumption is also made for the prior results in \cite{IRLNg}.

Given a standard MDP, a policy is defined as a map $\pi : S\to A$. Given a policy $\pi$, we can define two functions.\\~\\
The first is the \textit{value function} at a state $s_1$ which is defined as 
\[
V^{\pi}(s_1) = \mathbb{E}\bigl[R(s_1)+\gamma R(\tau(s_1)) +\\ \gamma^2 R(\tau(\tau(s_1 ))) + \ldots \mid \pi \bigr]
\]
where $\tau(s)$ represents the trajectory under policy $\pi$. The second is the \textit{$Q$ function} which is defined as
$$
Q^{\pi}(s,a)  =  R(s) + \gamma \mathbb{E}_{s'\sim P_{a}(s)}[V^\pi (s')]
$$
The \textit{Bellman Optimality equation} states that a policy $\pi^*(s)$ is an optimal policy for an MDP if and only if 
$$
\pi^*(s) \in \argmax_{a\in A} Q^{\pi^*}(s,a),\;\;\; s\in S
$$
The \textbf{Inverse Reinforcement Learning problem} for a standard MDP is posed as the problem of finding the reward function $R$ that generates a desired optimal policy $\pi^*$ given a MDP without reward (MDP$\setminus R$) and the optimal policy $\pi^*$ to be generated.

\cite{IRLNg} prove that for a finite-state MDP with reward $R$, and for $\pi^*\equiv a_1$, the Bellman optimality equation is equivalent to the following condition:

\begin{multline}\label{eq:BellOpt}
(P_{a_1}(i) - P_a(i) )(I-\gamma P_{a_1} )^{-1}R\ge0 \;\;\;\\ \forall\; a\in A\setminus a_1, i=1,\ldots,n
\end{multline}

It can also be shown that  $\pi^*\equiv a_1$ is the unique optimal policy if the above inequality is strict. We note that this condition is necessary and sufficient for the policy to be optimal for the reward. 

This condition also forms the basis for the $\beta$-strict separability which we introduce in \cite{komanduru2019correctness} and will make use of in our construction. We reproduce the aforementioned condition here for convenience.
\begin{definition}[\textbf{$\beta$-Strict Separability}]
	Let $\beta >0$. An inverse reinforcement learning problem $(S,A,P_a,\gamma)$ with optimal policy $\pi^*\equiv a_1$ satisfies \textit{$\beta$-strict separability} if and only if there exists a reward function $R^* : S\to \mathbb{R}$ that satisfies Bellman optimality strictly. More formally,

	$$
	\lVert R^*\rVert_1=1
	$$
	and
	\begin{multline*}
	(P_{a_1}(i) - P_a(i) )(I-\gamma P_{a_1} )^{-1} R^* \ge \beta > 0 \;\;\;\\ \forall a\in A\setminus a_1, i = 1,\ldots, n
	\end{multline*}
\end{definition}

There are various formulations and solution methods for the standard MDP Inverse Reinforcement Learning problem such as those presented in \cite{IRLNg}, \cite{RamaBayes}, \cite{syed2008apprenticeship} and \cite{komanduru2019correctness} to list a few. Our concern in this paper is not the particular method of solution. Instead we seek to provide an information-theoretic lower bound for the sample complexity of the IRL problem. To achieve this, we use Fano's inequality \cite{cover2006elements} along with the construction of an ensemble. 

Throughout this paper we will use $\mathcal{F}$ to represent MDP$\setminus R$ problems of the specified construction as described in Section \ref{sec:GeoCon}, the collection of which is denoted by $F$.

We also represent the unit sphere in $\mathbb{R}^n$ with $\mathcal{S}^{n-1}$. That is

$$
\mathcal{S}^{n-1} = \left\{x\in \mathbb{R}^n \mid \lVert x\rVert_2= 1 \right\}
$$

We also utilize the concept of spherical codes in this paper. A spherical code with parameters $(n,N,\cos \theta)$ represents a set of $N$ points (say $y_i, i=1,\ldots, N$) on the unit sphere $\mathcal{S}^{n-1} \subset \mathbb{R}^n$ such that the angle between the unit vectors from the origin to any two distinct points is at least $\theta$, i.e.,
$$
\langle y^i, y^j \rangle \le \cos\theta, \;\;\; i\ne j
$$
where $\langle\cdot,\cdot \rangle$ represents the dot product on $\mathbb{R}^n$.

Given these preliminaries, we will now proceed to the construction of the ensemble $F$ along with its properties.

\section{Geometric Construction}\label{sec:GeoCon}
In this section we aim to construct a set of inverse reinforcement learning problems with the intention of applying Fano's inequality to obtain a lower bound for the sample complexity. The geometry of the construction of these problems provides a lower bound of the number of such problems that can be constructed which allows for a Fano's style approach to bounding the sample complexity. First we will describe the geometric construction of the IRL problem sets along with the resulting number of problems constructed. Next we will provide an upper bound for the KL divergence between the densities of the transition probabilities. Finally, we use these results along with Fano's inequality to come up with the sample complexity lower bound.

Consider a set of $n$-state 2-action IRL problems $F = \left\{\mathcal{F}^i\right\}$ where each problem $i$ consists of two possible actions, $a^i_1 = a_1$ and $a^i_2$ with corresponding transition probabilities $P_{a_1}\in [0,1]^{n\times n}$ and $P_{a^i_2}\in [0,1]^{n\times n}$. Let $\gamma \in (0,1)$ be fixed between all the problems. Let $R^i\in \mathbb{R}^n$ be the reward function for problem $i$ that results in action $a_1$ as the optimal policy. Further, let $P_{a_1}$, the transition probability under the desired optimal action $a_1$, be fixed between the set of problems and be given as follows
$$
P_{a_1} = \begin{bmatrix}
\frac{1}{n} & \frac{1}{n} & \frac{1}{n} & \ldots & \frac{1}{n}\\
\frac{1}{n} & \frac{1}{n} & \frac{1}{n} & \ldots & \frac{1}{n}\\ 
\vdots&\vdots&\vdots&\ddots&\vdots\\
\frac{1}{n} & \frac{1}{n} & \frac{1}{n} & \ldots & \frac{1}{n}\\
\end{bmatrix}
$$

We construct the problem pairs $\left(\mathcal{F}^i, R^i\right)$ such that the following two relations, which follow from the Bellman Optimality condition (Eq \ref{eq:BellOpt}) in matrix form, hold
$$
\left(P_{a_1}-P_{a^i_2}\right)\left(I -\gamma P_{a_1}\right)^{-1}R^i \succeq \textbf{0}
$$
and
$$
\left(P_{a_1}-P_{a^i_2}\right)\left(I -\gamma P_{a_1}\right)^{-1}R^j \nsucceq \textbf{0}\;\;\; i\ne j
$$
That is, reward $R^i$ results in $a_1$ being the Bellman optimal action only for its corresponding problem set $\mathcal{F}^i$. Here the notation $\succeq$ represents the entrywise $\ge$ relation. The notation $\textbf{0}$ is used to represent a vector of zeros.

Furthermore we want to enforce $\beta$- strict separability in each problem set. That is, for each problem set $\mathcal{F}^i$, we have $||R^i||_1 = 1$ and

\begin{multline*}
\left(P_{a_1}(j)-P_{a^i_2}(j)\right)\left(I -\gamma P_{a_1}\right)^{-1}R^i \ge\beta >0 \;\;\;\\\forall j =1,\ldots,n
\end{multline*}
We  now look at the construction of such sets of problems $\mathcal{F}^i$

Let $P_a(i)$ represent the $i$-th row of $P_a$. We notice that $P_a(i)$ belongs to the following set
$$
G^1_n := \left\{x\mid x\in \mathbb{R}^n, \; \sum_{i=1}^{n}x_i = 1\right\}
$$

Now notice that if $P_a(i)$, $P_{a_1}(j) \in G^1_n$ then $P_{a_1}(i) - P_{a}(j)$ belongs to the hyperplane $H_n = \left\{x\in \mathbb{R}^n\mid \sum_{i=1}^n x_i = 0\right\}$. We also notice that there exists an invertible rotation matrix $\Pi:\mathbb{R}^n \rightarrow \mathbb{R}^n$ such that the following holds
$$
\Pi(x) = \begin{bmatrix}
y\\0
\end{bmatrix},\;\;\; x\in H_n, \; y\in \mathbb{R}^{n-1}
$$
or alternatively
$$
\text{Proj}_n(\Pi(x)) = \textbf{0},\;\;\; x\in H_n
$$
where $\text{Proj}_n$ represents the projection on to the $e_n$ dimension, i.e., projection on to the orthogonal subspace of the unit vector $e_n$. Here $\{e_i\}_{i= 1,\ldots, n}$ represent the canonical basis of $\mathbb{R}^n$.  

It is also of note that none of the translation, rotation or projection operations described above change the $2$-norm of $P_{a_1}(i) - P_{a}(j)$. Thus, through composition, we can come up with a one-to-one mapping between the points $\left\{P_a(i)\mid ||P_{a_1}(i)- P_{a}(j)||<\varepsilon \right\}$ and the points $\left\{x \mid x\in \mathbb{R}^{n-1},\; ||x||<\varepsilon \right\}$

Now we describe a construction of each problem $\mathcal{F}^i$ by specifying the second transition matrix $P_{a_2^i}$ through $P_{a_1}- P_{a_2^i}$ and the corresponding reward $R^i$ by constructing their equivalents in $\mathbb{R}^{n-1}$.

Consider the unit sphere in $\mathbb{R}^{n-1}$ given by $\mathcal{S}^{n-2}$. Consider the set of points defining a spherical code $(n-1,N,\cos\theta)$ with minimum angle $\theta$ on $\mathcal{S}^{n-2}$ such that $N$ is maximal. From \cite{jenssen2018kissing}  we have the result

\begin{equation}\label{eq:numv}
N \ge (1+o(1))\sqrt{2\pi(n-1)}\frac{\cos \theta}{\sin^{n-2} \theta}
\end{equation}	
 To form the transition probabilities and the corresponding rewards we wish to consider the facets $\mathcal{Y}$ of the simplicial polytope formed by the spherical code. Since the convex polytope formed by the maximal spherical code can be simplicially decomposed \cite{edmonds1970simplicial}, the resulting simplicial decomposition can be used to form a simplicial polytope with the same vertices with the condition that the interiors of the facets, a $n-2$ simplex, are pairwise disjoint. The number of facets of such a simplicial polytope are lower bounded by \cite{barnette1971minimum} 
 \begin{equation}\label{eq:nfacets}
 |\mathcal{Y}| \ge (n-2)N- (n-1)(n-3)
 \end{equation}

  We denote the elements of $\mathcal{Y} $ as $\mathcal{Y}^i$, such that any pair of points $y \in \mathcal{Y}^i$ are neighbors with respect to the spherical code and
$$
|\mathcal{Y}^i \cap \mathcal{Y}^j| \le n-2,\;\;\; i\ne j
$$

To form the set of problems, we consider the pairwise disjoint cones formed by the vertices in each $\mathcal{Y}^i$ and the origin. The corresponding rewards are formed from the centroids of each $\mathcal{Y}^i$. The resulting geometry from the disjoint interiors of the $\mathcal{Y}^i$ ensures that each reward function only results in $a_1$ as the optimal action for the corresponding problem.

 For every $\mathcal{Y}^i$ we denote the centroid of $\mathcal{Y}^i$ as follows

\begin{equation}\label{eq:centroid}
\bar{y}^i = \frac{1}{n-1}\sum_{y \in \mathcal{Y}^i} y
\end{equation}

We also consider the following hyperplanes in $\mathbb{R}^{n-1}$ passing through the elements of the leave-one-out set of $\mathcal{Y}^i$ and the origin as defined by the corresponding normal vectors $p^i_j$. We further impose the constraint that the norm of each $p^i_j$ is constant across all $i$ and $j$. Formally, $p_j^i\in \mathbb{R}^{n-1}$  is defined by the following conditions:
$$
p^{iT}_j y^i_k = 0, \;\;\; j\ne k, \;\;\;1\le k \le 
n
$$
$$
p_j^{iT}  \bar{y}^i >0
$$
$$
\lVert p^i_j\rVert_2 = \varepsilon
$$

Notice that $p^i_j$ is an element of the null space of 
$$
y^i := \begin{bmatrix}
y^{i}_1&
\ldots&
y^{i}_{j-1}&
y^{i}_{j+1}&
\ldots&
y^{i}_n&
\textbf{0}&
\end{bmatrix}^T
$$

where the exponent $T$ represents transpose, such that $\bar{y}^i$  lies in the interior of the cone formed by the hyperplanes. We also notice that as a result of the construction of the set $\mathcal{Y}^i$ and the hyperplanes defined by $p^i_j$
$$
\begin{bmatrix}
p^{iT}_1\\
\vdots\\
p^{iT}_{n-1}
\end{bmatrix} \bar{y}^k \succeq \textbf{0} \iff i=k
$$
If $\hat{y}^i$ is the unit vector in direction of $\bar{y}^i$, then it follows that

$$
\begin{bmatrix}
p^{iT}_1\\
\vdots\\
p^{iT}_{n-1}
\end{bmatrix} \hat{y}^k \succeq \textbf{0} \iff i=k
$$
and also since the centroid  $\bar{y}^i$ of points on the sphere lies in the interior of the sphere since the sphere is a convex shape, while  $\hat{y}^i$ lies on the surface of the sphere. An example of such a spherical code formation along with visualization of the hyperplanes and centroid involved for a single facet is provided in Figure \ref{fig:sph_code}. 

\begin{figure*}[!tb]
	
	\minipage{0.48\textwidth}
	\centering
	\includegraphics[width=0.7\linewidth]{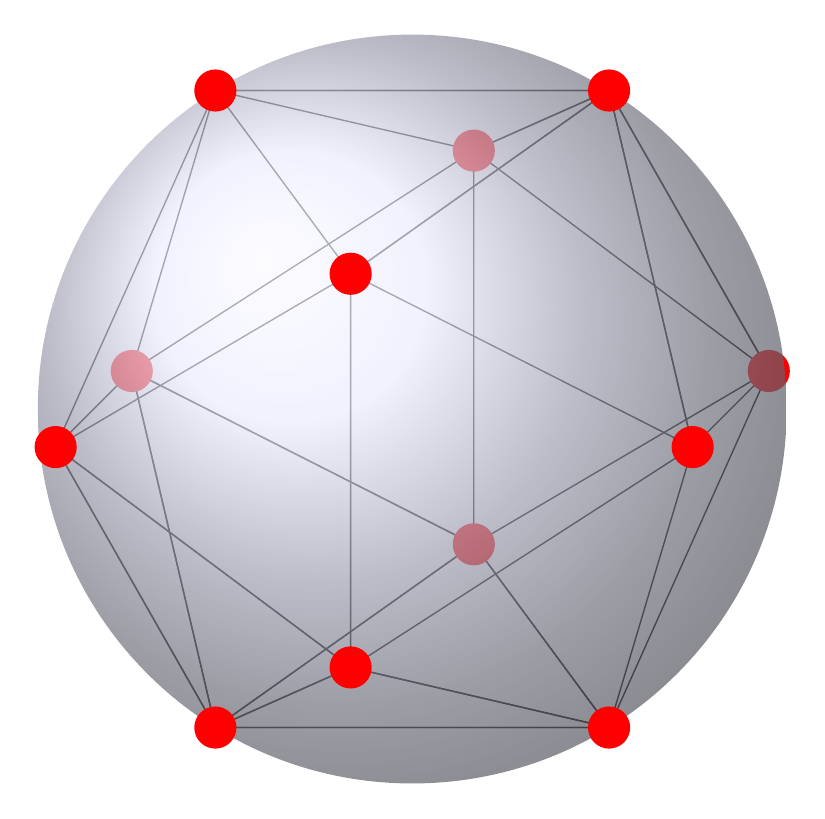}
	
	\endminipage\hfill
	\minipage{0.48\textwidth}
	\centering
	\includegraphics[width=0.7\linewidth]{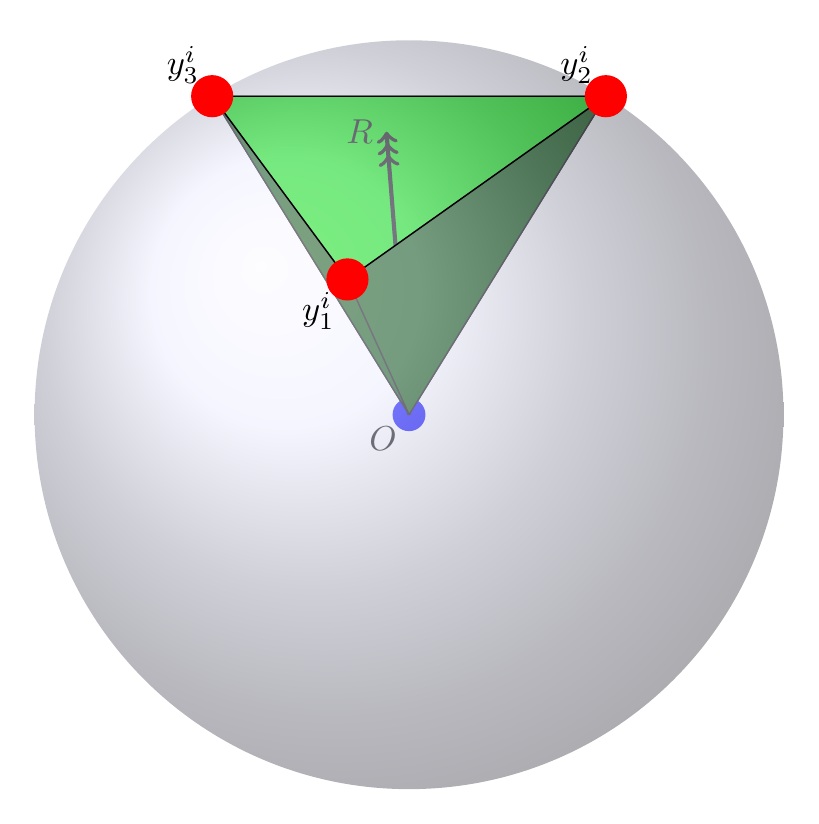}
	
	\endminipage
	\caption{\textbf{Left:} An example graphical visualization a spherical code (kissing number arrangement) on the sphere $\mathcal{S}^2$. The arrangement forms a regular icosahedron. The vertices (elements of $\mathcal{Y}$) are marked in red and the corresponding ridges of the polytope are outlined in black.   \textbf{Right:} An example graphical visualization of a facet ($\mathcal{Y}^i$) from the configuration on the left used in the formation of a single problem. The hyperplanes formed from the leave-one-out set of $\mathcal{Y}^i$ and the origin are shown as the green planes (with normals $p^i_j$) passing through the origin (blue). The reward direction corresponding to $\hat{y}^i$ is shown with the black arrow labeled $R$ }\label{fig:sph_code}
\end{figure*}

\begin{lemma}\label{Lem:dotprod}
	Let $p^i_j$ and $\bar{y}^i$ be as described above. Let $\theta$ be the minimum angle between any pair of points $y^i_j\in \mathcal{Y}$. Let $\hat{y}^i = \bar{y}^i/||\bar{y}^i||_2 $  be the unit vector along $\bar{y}^i$. Then we have
	$$
	\frac{ p^{iT}_j \hat{y}^i}{||p^i_j||_2 } \ge \frac{2\sin \theta/2}{\sqrt{2(n-2)(1+(n-2) \cos \theta)}}
	$$
\end{lemma}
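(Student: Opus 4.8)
The plan is to reduce the claimed ratio to two quantities that can be estimated separately from the spherical‑code angle condition alone: the distance of $y^i_j$ to the hyperplane it is being separated from, and the length of the centroid vector $\bar y^i$. First I would exploit the defining orthogonality of $p^i_j$. Since $p^{iT}_j y^i_k = 0$ for every $k\neq j$ and $\bar y^i$ is the (unnormalised) average of the $n-1$ vertices of $\mathcal{Y}^i$, all cross terms vanish, so
$$
p^{iT}_j\bar y^i=\frac{1}{n-1}\,p^{iT}_j y^i_j .
$$
Dividing by $\|p^i_j\|_2\|\bar y^i\|_2$ and using $\hat y^i=\bar y^i/\|\bar y^i\|_2$, the quantity of interest becomes
$$
\frac{p^{iT}_j\hat y^i}{\|p^i_j\|_2}
=\frac{1}{n-1}\cdot\frac{p^{iT}_j y^i_j}{\|p^i_j\|_2}\cdot\frac{1}{\|\bar y^i\|_2}.
$$
The middle factor $p^{iT}_j y^i_j/\|p^i_j\|_2$ is precisely the Euclidean distance from the unit vector $y^i_j$ to the hyperplane $\mathrm{span}\{y^i_k:k\neq j\}$; it is nonnegative because the sign of $p^i_j$ was chosen so that $p^{iT}_j\bar y^i>0$, hence $p^{iT}_j y^i_j>0$.

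Next I would bound the centroid. Expanding the square and using the code condition $\langle y^i_k,y^i_\ell\rangle\le\cos\theta$ on each of the $(n-1)(n-2)$ off‑diagonal pairs,
$$
\|\bar y^i\|_2^2=\frac{1}{(n-1)^2}\Bigl((n-1)+\sum_{k\neq\ell}\langle y^i_k,y^i_\ell\rangle\Bigr)
\le\frac{1+(n-2)\cos\theta}{n-1},
$$
so $(n-1)\|\bar y^i\|_2\le\sqrt{(n-1)\bigl(1+(n-2)\cos\theta\bigr)}$, and since $n-1\le 2(n-2)$ for $n\ge 3$ this is at most $\sqrt{2(n-2)\bigl(1+(n-2)\cos\theta\bigr)}$, which is the denominator in the statement. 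Combining with the identity above, it therefore suffices to show that the distance from $y^i_j$ to $\mathrm{span}\{y^i_k:k\neq j\}$ is at least $2\sin(\theta/2)$.

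The main obstacle is exactly this last distance bound. The natural lever is that any two facet vertices are at angle at least $\theta$, so $\|y^i_j-y^i_k\|_2=\sqrt{2-2\langle y^i_j,y^i_k\rangle}\ge\sqrt{2-2\cos\theta}=2\sin(\theta/2)$; thus $y^i_j$ is at distance $\ge 2\sin(\theta/2)$ from each individual spanning vector. The subtlety is that the distance to the span can a priori be smaller than the distance to the spanning points, so one must additionally use that the leave‑one‑out vectors $\{y^i_k:k\neq j\}$ are well spread — their pairwise angles being $\ge\theta$ controls the smallest singular value of their Gram matrix and prevents $\mathrm{span}\{y^i_k:k\neq j\}$ from being too closely aligned with $y^i_j$. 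Concretely I would use the volume/Gram identity $\mathrm{dist}(y^i_j,\mathrm{span}\{y^i_k:k\neq j\})^2=\det G_{\mathrm{all}}/\det G_{\mathrm{others}}$, bounding $\det G_{\mathrm{others}}$ above (Hadamard, diagonal $1$) and $\det G_{\mathrm{all}}$ below via the angle condition, and then assemble the three pieces — the orthogonality identity, the centroid bound $\|\bar y^i\|_2^2\le(1+(n-2)\cos\theta)/(n-1)$, and the distance bound — together with $n-1\le 2(n-2)$ to conclude.
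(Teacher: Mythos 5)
Your first two ingredients --- the orthogonality identity $p^{iT}_j\bar y^i=\tfrac{1}{n-1}p^{iT}_j y^i_j$ and the centroid bound $\|\bar y^i\|_2^2\le(1+(n-2)\cos\theta)/(n-1)$ --- coincide with the paper's proof, and you have correctly isolated the crux: $p^{iT}_j y^i_j/\|p^i_j\|_2$ is the distance from $y^i_j$ to the \emph{linear span} of the other facet vertices (a hyperplane through the origin), not to their affine hull, so the ``height of the simplex'' intuition does not transfer automatically. The gap is in what you reduce to. Having spent the factor $\sqrt{(n-1)/(2(n-2))}$ via $n-1\le 2(n-2)$, you need $\mathrm{dist}\bigl(y^i_j,\mathrm{span}\{y^i_k:k\ne j\}\bigr)\ge 2\sin(\theta/2)$, and this is false precisely in the extremal configurations the lemma must cover. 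For a facet whose Gram matrix has all off-diagonal entries equal to $\cos\theta$, your own determinant identity gives $\mathrm{dist}^2=(1-\cos\theta)\,\frac{1+(n-2)\cos\theta}{1+(n-3)\cos\theta}=2\sin^2(\theta/2)\,\frac{1+(n-2)\cos\theta}{1+(n-3)\cos\theta}$, which is below $4\sin^2(\theta/2)$ whenever $\cos\theta>-\frac{1}{n-4}$; for a kissing-type code with $\cos\theta=\tfrac12$ and large $n$ the distance is about $\sqrt2\sin(\theta/2)$, short of your target by a factor of $\sqrt2$, and already at $n=3$ the distance is $\sin\theta=2\sin(\theta/2)\cos(\theta/2)<2\sin(\theta/2)$. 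So the inequality you declare ``sufficient'' cannot be proved.

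The proposed repair also cannot work from the stated hypotheses: a positive lower bound on $\det G_{\mathrm{all}}$ using only $\langle y^i_k,y^i_\ell\rangle\le\cos\theta$ is unavailable, since unit vectors with all pairwise angles at least $\theta$ can be linearly dependent (e.g.\ containing a near-antipodal pair), making $\det G_{\mathrm{all}}$, and hence the distance to the span, arbitrarily close to zero. Any honest bound must invoke additional structure --- that the $y^i_k$ are the vertices of a single facet of the convex hull of the code, hence lie strictly on one side of a supporting hyperplane, or some two-sided control of the inner products. To your credit, the paper's own proof of this step simply asserts the height bound $p^{iT}_jy^i_j/\|p^i_j\|_2\ge 2\sin(\theta/2)\sqrt{\tfrac{n-1}{2(n-2)}}$ without addressing the span-versus-affine-hull issue, so you have put your finger on the genuinely delicate point; but as written your argument neither proves the needed estimate nor reduces to a true one.
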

\begin{proof}
	
	We notice that the distance of $\bar{y}^i$ to each hyperplane is greater than or equal to the minimum radius of the ball inside the $n-2$ simplex formed by $\mathcal{Y}^i$ . We can show this from the inner product of $p^i_j$ and $\bar{y}^i$ as well.
	
	Consider $p^{iT}_j \bar{y}^i$, using the expression for the centroid given in Equation \ref{eq:centroid}, we get
	$$
	p^{iT}_j \bar{y}^i = p^{iT}_j \left[\frac{1}{n-1}\sum_{k = 1 }^{n-1} y^i_k\right]
	$$
	since we know from the construction of $p^i_j$ that $p^{iT}y^i_k = 0,\; j\ne k $. Therefore we have,
	$$
	p^{iT}_j \bar{y}^i =\frac{1}{n-1} p^{iT}_j  y^i_j
	$$
	We also notice that the projection of the ray from the origin to the vertex of the $n-2$ simplex, $ y^i_j$, orthogonal to the hyperplane forming the opposing side of the $n-2$ simplex, the normal vector of which is $p^i_j$, is at least the height of the $n-2$ simplex which is given by $s\sqrt{\frac{n-1}{2(n-2)}}$, where $s\ge 2\sin(\theta/2)$ is the edge length of the simplex $\mathcal{Y}^i$. That is
	$$
	\frac{ p^{iT}_j  y^i_j}{\lVert p^{i}_j\rVert_2} \ge  2\sin(\theta/2) \sqrt{\frac{n-1}{2(n-2)}} = \frac{2 \sin \theta/2}{\sqrt{2(n-2)(n-1)}}
	$$
	$$
	\begin{aligned}
	\implies \frac{ p^{iT}_j  \bar{y}^i}{\lVert p^{i}_j\rVert_2} &\ge \frac{1}{n-1} 2\sin(\theta/2) \sqrt{\frac{n-1}{2(n-2)}}\\
	 &= \frac{2 \sin \theta/2}{\sqrt{2(n-2)(n-1)}}
	\end{aligned}
	$$

	Now we also have from Equation \ref{eq:centroid} and the fact that $y^i_k$ are points on a maximal spherical code with minimum angle $\theta$. As a result $\left\langle y^i_j, y^i_k\right\rangle \le
	\cos \theta$ and thus
	$$
	\begin{aligned}
	||\bar{y}^i||_2^2 &= \left\lVert \frac{1}{n-1}\sum_{k = 1 }^{n-1} y^i_k \right\rVert^2\\
	&=\frac{1}{(n-1)^2}\left[\sum_{k = 1 }^{n-1} \left\lVert y^i_k\right\rVert^2 + 2 \sum_{1\le j< k \le n-1 }\left\langle y^i_j, y^i_k\right\rangle \right]\\
	&\le \frac{1}{(n-1)^2} \left[ n-1 + 2 \frac{(n-1)(n-2)}{2}\cos \theta\right]\\
	&=\frac{1+(n-2) \cos \theta}{n-1}
	\end{aligned}
	$$
	$$
	\begin{aligned}
	\implies\frac{ p^{iT}_j \hat{y}^i}{||p^i_j||_2 } &= \\\frac{ p^{iT}_j \bar{y}^i}{||p^i_j||_2 ||\bar{y}^i||_2 } &\ge \frac{2\sin \theta/2}{\sqrt{2(n-2)(1+(n-2) \cos \theta)}}
	\end{aligned}
	$$
\end{proof}

 We will use the $p^i_j$'s to construct the matrix $P_{a_1}- P_{a^i_2}$ and $\bar{y}^i$ to form the corresponding reward $R^i$ through the following transformations
 
 $$
 P_{a_1}(j) - P_{a_2^i}(j) = \left(\Pi^T \begin{bmatrix}
 p^{i}_j\\0
 \end{bmatrix}\right)^T, \;\;\; 1\le j \le n-1
 $$ 
 $$
 P_{a_1}(n) - P_{a_2^i}(n) = \left(\Pi^T \begin{bmatrix}
 p^{i}_1\\0
 \end{bmatrix}\right)^T 
 $$

 \begin{equation}\label{eq:constr}
 R^i = \frac{\left(I - \gamma P_{a_1}\right)\Pi^T \begin{bmatrix}
 	\hat{y}^i\\0
 	\end{bmatrix}}{\left\lVert\left(I - \gamma P_{a_1}\right)\Pi^T \begin{bmatrix}
 	\hat{y}^i\\0
 	\end{bmatrix}\right\rVert_1}
 \end{equation}

 Since there are only $n-1$ vectors $p^{i}_j$ while  $P_{a_1} - P_{a_2^i}$ is an $n \times n$ matrix, we use $p^{i}_1$ for the final row so that the interior of the transformed $n-2$ simplex in $\mathbb{R}^{n-1}$ remains the same. 
 
  Notice that from the above construction, we have the following conditions being met
 
 $$
 \left(P_{a_1}-P_{a^i_2}\right)\left(I -\gamma P_{a_1}\right)^{-1}R^i \succeq 0
 $$
 and
 $$
 \left(P_{a_1}-P_{a^i_2}\right)\left(I -\gamma P_{a_1}\right)^{-1}R^j \nsucceq 0\;\;\; i\ne j
 $$
  Next we will use the $\beta$-strict separability condition to form a relation between $\beta$, $\varepsilon = \lVert p^i_j\rVert_2$ and the angle $\theta$ that generates the spherical code.
  \section{Analysis of Geometric Construction and Sample Complexity of IRL} \label{sec:samp_comp}
The previous section described the geometric construction of the ensemble of IRL problems. We now analyze the geometric construction described in the previous section in order to find the conditions for $\beta$-strict separability of the generated problems. We also derive bounds on the cardinality of the problem ensemble constructed as well as the KL divergence between the trajectories generated by different problems in the ensemble. Finally we use these results along with Fano's inequality \cite{cover2006elements}, to derive an information-theoretic lower bound for the sample complexity of the IRL problem.
  
  \begin{lemma}\label{lem:theta}
 Consider a IRL problem and reward pair constructed as in Equation  \ref{eq:constr}. If the following relation between $\beta$, $\varepsilon $ and $\theta$ holds
 $$
 \sin^2 \frac{\theta}{2} = \frac{n(n-1)(n-2)\beta^2}{2\varepsilon^2 + 2n(n-2)^2\beta^2}
 $$
 then the problem is $\beta$-strict separable.
  \end{lemma}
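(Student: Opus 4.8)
The plan is to check directly that the pair $\left(\mathcal{F}^i,R^i\right)$ produced by Equation \ref{eq:constr} satisfies the two clauses of $\beta$-strict separability: that $\lVert R^i\rVert_1=1$, and that $\left(P_{a_1}(j)-P_{a^i_2}(j)\right)\left(I-\gamma P_{a_1}\right)^{-1}R^i\ge\beta$ for every $j=1,\dots,n$. The first clause is immediate, since Equation \ref{eq:constr} defines $R^i$ as a vector divided by its own $\ell_1$-norm.

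For the second clause, abbreviate by $v^i:=\Pi^T\begin{bmatrix}\hat y^i\\0\end{bmatrix}$ the vector that appears inside Equation \ref{eq:constr}, and let $C^i:=\lVert(I-\gamma P_{a_1})v^i\rVert_1$ be its normalizing constant, so $R^i=(I-\gamma P_{a_1})v^i/C^i$. Since $\gamma\in(0,1)$ and $P_{a_1}$ is stochastic, $I-\gamma P_{a_1}$ is invertible, and the factor cancels: $(I-\gamma P_{a_1})^{-1}R^i=v^i/C^i$. The $j$-th row of $P_{a_1}-P_{a^i_2}$ equals $\bigl(\Pi^T\begin{bmatrix}p_j^i\\0\end{bmatrix}\bigr)^T$ (with $p_1^i$ in place of $p_j^i$ when $j=n$), and because $\Pi$ is a rotation we have $\Pi\Pi^T=I$; hence the entire product collapses to
$$
\left(P_{a_1}(j)-P_{a^i_2}(j)\right)\left(I-\gamma P_{a_1}\right)^{-1}R^i=\frac{p_j^{iT}\hat y^i}{C^i}.
$$

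The remaining ingredient is an upper bound on $C^i$. The vector $v^i$ lies in the sum-zero hyperplane $H_n$ --- this is exactly the defining property of the rotation $\Pi$ --- and the uniform matrix $P_{a_1}$ sends every element of $H_n$ to $\mathbf{0}$, so $(I-\gamma P_{a_1})v^i=v^i$ and therefore $C^i=\lVert v^i\rVert_1$. Since $\Pi$ is a rotation, $\lVert v^i\rVert_2=\lVert\hat y^i\rVert_2=1$, whence Cauchy--Schwarz gives $C^i\le\sqrt n$. Substituting this together with Lemma \ref{Lem:dotprod} and $\lVert p_j^i\rVert_2=\varepsilon$ into the last display yields, uniformly in $j$,
$$
\left(P_{a_1}(j)-P_{a^i_2}(j)\right)\left(I-\gamma P_{a_1}\right)^{-1}R^i\ge\frac{\varepsilon}{\sqrt n}\cdot\frac{2\sin(\theta/2)}{\sqrt{2(n-2)\bigl(1+(n-2)\cos\theta\bigr)}}.
$$

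It then remains to substitute $\cos\theta=1-2\sin^2(\theta/2)$, so that $1+(n-2)\cos\theta=(n-1)-2(n-2)\sin^2(\theta/2)$, and to verify by rearrangement that this displayed lower bound equals $\beta$ exactly under the stated relation $\sin^2(\theta/2)=\tfrac{n(n-1)(n-2)\beta^2}{2\varepsilon^2+2n(n-2)^2\beta^2}$. Together with $\lVert R^i\rVert_1=1$, this establishes $\beta$-strict separability. I expect the crux to be the cancellation forcing $C^i=\lVert v^i\rVert_1\le\sqrt n$: it is what keeps the $\ell_1$-normalization from shrinking the separation margin, and once it is in place the rest is the telescoping into Lemma \ref{Lem:dotprod} plus routine half-angle algebra.
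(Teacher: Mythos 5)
Your proof is correct and follows the same skeleton as the paper's: cancel $(I-\gamma P_{a_1})$ against its inverse, use $\Pi\Pi^T=I$ to collapse the product to $p_j^{iT}\hat y^i/C^i$, invoke Lemma \ref{Lem:dotprod}, and finish with the half-angle substitution (which I checked does produce exactly $\beta$ under the stated relation). The one place you genuinely diverge is the bound $C^i\le\sqrt n$: the paper obtains it by interpreting the normalizer as the operator norm of $(I-\gamma P_{a_1})\Pi^T$ from $(\mathbb{R}^n,\lVert\cdot\rVert_2)$ to $(\mathbb{R}^n,\lVert\cdot\rVert_1)$ and passing by duality to the adjoint's $\ell_\infty\to\ell_2$ norm, whereas you observe that $v^i=\Pi^T\bigl[\hat y^i;0\bigr]$ lies in the sum-zero hyperplane, which the uniform matrix $P_{a_1}$ annihilates, so $(I-\gamma P_{a_1})v^i=v^i$ and $C^i=\lVert v^i\rVert_1\le\sqrt n\,\lVert v^i\rVert_2=\sqrt n$ by Cauchy--Schwarz. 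Your route is more elementary and, by exploiting the specific rank-one structure of $P_{a_1}$, avoids the paper's appeal to $\lVert I-\gamma P_{a_1}\rVert_2=1$ and the duality step; it also makes transparent exactly why the $\ell_1$-normalization costs at most a factor of $\sqrt n$. Both arguments yield the same constant, so nothing downstream changes.
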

 \begin{proof}
 	Note that

 \begin{multline*}
(P_{a_1}(j) - P_{a^i_2}(j)) \left(I - \gamma P_{a_1}\right)^{-1} R^i =\\ \left(\Pi^T \begin{bmatrix}
p^{i}_j\\0
\end{bmatrix}\right)^T \left(I - \gamma P_{a_1}\right)^{-1}  \frac{\left(I - \gamma P_{a_1}\right)\Pi^T \begin{bmatrix}
	\hat{y}^i\\0
	\end{bmatrix}}{\left\lVert\left(I - \gamma P_{a_1}\right)\Pi^T \begin{bmatrix}
	\hat{y}^i\\0
	\end{bmatrix}\right\rVert_1} 
 \end{multline*}
 $$
 \begin{aligned}
 =\frac{ \begin{bmatrix}
 	p^{i}_j\\0
 	\end{bmatrix}^T \begin{bmatrix}
 	\hat{y}^i\\0
 	\end{bmatrix}}{\left\lVert\left(I - \gamma P_{a_1}\right)\Pi^T \begin{bmatrix}
 	\hat{y}^i\\0
 	\end{bmatrix}\right\rVert_1} &\ge \frac{\begin{bmatrix}
 	p^{i}_j\\0
 	\end{bmatrix}^T \begin{bmatrix}
 	\hat{y}^i\\0
 	\end{bmatrix}}{\max_y\left\lVert\left(I - \gamma P_{a_1}\right)\Pi^T \begin{bmatrix}
 	y\\0
 	\end{bmatrix}\right\rVert_1} \\ &\ge \beta,\;\;\;\; ||y||_2 =1
 \end{aligned}
 $$

 Notice that since $\Pi$ is a rotation matrix $||\Pi||_2 = 1$. Additionally from the construction of $ \left(I - \gamma P_{a_1}\right)$, we have $|| \left(I - \gamma P_{a_1}\right)||_2 = 1$. Now consider 
 $$
 \max_{y,\; ||y||_2 =1}\left\lVert\left(I - \gamma P_{a_1}\right)\Pi^T \begin{bmatrix}
 y\\0
 \end{bmatrix}\right\rVert_1
 $$
 The above equation represents the norm of $\left(I - \gamma P_{a_1}\right)\Pi^T$ as an operator from $(\mathbb{R}^n, \lVert\cdot\rVert_2)$ to  $(\mathbb{R}^n, \lVert\cdot\rVert_1)$. By duality, this is the same as the norm of the adjoint $\Pi\left(I - \gamma P_{a_1}\right)^T$ as an operator from $(\mathbb{R}^n, \lVert\cdot\rVert_\infty)$ to  $(\mathbb{R}^n, \lVert\cdot\rVert_2)$. Using this we get
 $$
 \max_{ ||y||_2 =1}\left\lVert\left(I - \gamma P_{a_1}\right)\Pi^T \begin{bmatrix}
 y\\0
 \end{bmatrix}\right\rVert_1
 $$
 
 $$
=\max_{||v||_\infty =1}\left\lVert\Pi\left(I - \gamma P_{a_1}\right)^T v\right\rVert_2 =\sqrt{n}
 $$
 
 This gives us
  $$
 (P_{a_1}(j) - P_{a^i_2}(j)) \left(I - \gamma P_{a_1}\right)^{-1} R^i \ge\frac{p^{iT}_j\hat{y}^i}{\sqrt{n}}=\frac{\varepsilon p^{iT}_j\hat{y}^i}{||p^i_j||_2 \sqrt{n}}
 $$

 Substituting the result of Lemma \ref{Lem:dotprod}, we get
  $$
 \frac{\varepsilon p^{iT}_j\hat{y}^i}{||p^i_j||_2 \sqrt{n} }\ge\frac{2\varepsilon\sin \theta/2}{\sqrt{2n(n-2)(1+(n-2) \cos \theta)}}
 $$
 
 Now substituting 
 $$
 \sin \frac{\theta}{2} = \sqrt{ \frac{n(n-1)(n-2)\beta^2}{2\varepsilon^2 + 2n(n-2)^2\beta^2}}
 $$
 and using the trigonometric formula $\cos \theta = 1- 2 \sin^2 \theta/2$ we get
 
  $$
 (P_{a_1}(j) - P_{a^i_2}(j)) \left(I - \gamma P_{a_1}\right)^{-1} R^i \ge \beta
 $$
 
 Since $\lVert R^i \rVert_1 = 1$ by construction, the problem is $\beta$-strict separable
\end{proof}

 We now proceed to use this construction to find a lower bound for the number of such probability matrix - reward function pairs as well as an upper bound on the KL divergence between the corresponding probability matrices. 

\begin{theorem}\label{thm:facet_bound}
	Given a construction of $\beta$-strict separable IRL problems and reward function pairs in Equation \ref{eq:constr}, where the angle $\theta$ of the spherical code used to generate the problems satisfies Lemma \ref{lem:theta}, the minimum number of such problem-reward pairs for a given $n$, $\varepsilon$ and $\beta$ is 
	\begin{multline*}
	|\mathcal{Y}| \ge (n-2) \Bigg( (1+o(1))\sqrt{2\pi(n-1)}\frac{\varepsilon^2-n(n-2)\beta^2}{\varepsilon^2 + n(n-2)^2\beta^2}\times\\
	\left(\frac{\varepsilon^2 + n(n-2)^2\beta}{\sqrt{n(n-1)(n-2)\left(2\varepsilon^2 +n(n-2)(n-3)\beta^2\right)}}\right)^{n-2}\Bigg)\\- (n-1)(n-3)
	\end{multline*}
\end{theorem}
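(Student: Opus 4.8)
The plan is to obtain the bound by simply composing the two combinatorial estimates already recorded above with the trigonometric identity of Lemma~\ref{lem:theta}. Equation~\ref{eq:nfacets} gives $|\mathcal{Y}| \ge (n-2)N - (n-1)(n-3)$, and Equation~\ref{eq:numv} lower bounds the maximal spherical-code size by $N \ge (1+o(1))\sqrt{2\pi(n-1)}\,\cos\theta/\sin^{n-2}\theta$. Hence the only work is to rewrite $\cos\theta$ and $\sin^{n-2}\theta$ in terms of $n$, $\varepsilon$, $\beta$ via the relation $\sin^2(\theta/2) = \tfrac{n(n-1)(n-2)\beta^2}{2\varepsilon^2 + 2n(n-2)^2\beta^2}$ supplied by Lemma~\ref{lem:theta}, and then substitute the resulting closed forms back into the two displays.

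For the substitution I would set $s := \sin^2(\theta/2)$ and use the half-angle identities $\cos\theta = 1 - 2s$ and $\sin^2\theta = 4s(1-s)$. Writing $D := 2\varepsilon^2 + 2n(n-2)^2\beta^2$, the lemma reads $s = n(n-1)(n-2)\beta^2/D$, and a short computation — in which the $\beta^2$-contributions collect because $(n-2)^2-(n-1)(n-2) = -(n-2)$ and $2(n-2)^2-(n-1)(n-2) = (n-2)(n-3)$ — yields $\cos\theta = \tfrac{\varepsilon^2 - n(n-2)\beta^2}{\varepsilon^2 + n(n-2)^2\beta^2}$ and $1-s = \tfrac{2\varepsilon^2 + n(n-2)(n-3)\beta^2}{D}$, so that $\sin^2\theta = \tfrac{4n(n-1)(n-2)\beta^2\bigl(2\varepsilon^2 + n(n-2)(n-3)\beta^2\bigr)}{D^2}$. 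Raising to the power $-(n-2)$ and using $D/2 = \varepsilon^2 + n(n-2)^2\beta^2$ produces, after collecting the $\beta$ factors, exactly the bracketed quantity $\bigl(\tfrac{\varepsilon^2 + n(n-2)^2\beta^2}{\beta\sqrt{n(n-1)(n-2)(2\varepsilon^2 + n(n-2)(n-3)\beta^2)}}\bigr)^{n-2}$ appearing in the theorem.

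Finally I would plug the two closed forms for $\cos\theta$ and $\sin^{-(n-2)}\theta$ into $N \ge (1+o(1))\sqrt{2\pi(n-1)}\,\cos\theta/\sin^{n-2}\theta$ and then into $|\mathcal{Y}| \ge (n-2)N - (n-1)(n-3)$, which delivers the stated inequality. The only substantive point beyond bookkeeping is non-vacuity: the factor $\cos\theta$ must be nonnegative for the estimate of Equation~\ref{eq:numv} to be a meaningful lower bound, which forces the standing assumption $\varepsilon^2 \ge n(n-2)\beta^2$ on the triple $(n,\varepsilon,\beta)$ — precisely the condition that keeps the first factor of the bound nonnegative. I expect the main difficulty to be nothing conceptual but rather careful execution of the telescoping algebra and correct propagation of the $(1+o(1))$ factor, since the hard geometric and combinatorial content is already encapsulated in Equations~\ref{eq:numv} and \ref{eq:nfacets} and in Lemma~\ref{lem:theta}.
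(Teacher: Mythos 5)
Your proposal is correct and follows essentially the same route as the paper's own proof: use the half-angle identities to convert the relation of Lemma \ref{lem:theta} into closed forms for $\cos\theta$ and $\sin\theta$ (your intermediate algebra, including the factorizations $(n-2)^2-(n-1)(n-2)=-(n-2)$ and $2(n-2)^2-(n-1)(n-2)=(n-2)(n-3)$, matches the paper exactly), then substitute into Equations \ref{eq:numv} and \ref{eq:nfacets}. Your additional remark on non-vacuity of the bound when $\cos\theta<0$ is a sensible observation the paper handles separately via Lemma \ref{lem:epslowbnd}.
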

\begin{proof}
	We start with the result of Lemma \ref{lem:theta}.
	
	$$
	\sin \frac{\theta}{2} = \sqrt{ \frac{n(n-1)(n-2)\beta^2}{2\varepsilon^2 + 2n(n-2)^2\beta^2}}
	$$
	From the trigonometric identities $\cos^2 \frac{\theta}{2}+\sin^2 \frac{\theta}{2} = 1$, $\sin\theta = 2 \sin \frac{\theta}{2}\cos \frac{\theta}{2}$ and $\cos\theta = 1-2\sin^2 \frac{\theta}{2}$ we get
	$$
	\sin \theta = \frac{\beta\sqrt{n(n-1)(n-2)\left(2\varepsilon^2 +n(n-2)(n-3)\beta^2\right)}}{\varepsilon^2 + n(n-2)^2\beta^2}
	$$
	$$
	\cos\theta = \frac{\varepsilon^2-n(n-2)\beta^2}{\varepsilon^2 + n(n-2)^2\beta^2}
	$$
	
	Substituting the above into Equation \ref{eq:numv} and subsequently into Equation \ref{eq:nfacets} we get
	\begin{multline*}
	|\mathcal{Y}| \ge (n-2) \Bigg( (1+o(1))\sqrt{2\pi(n-1)}\frac{\varepsilon^2-n(n-2)\beta^2}{\varepsilon^2 + n(n-2)^2\beta^2}\times\\
	\left(\frac{\varepsilon^2 + n(n-2)^2\beta^2}{\beta\sqrt{n(n-1)(n-2)\left(2\varepsilon^2 +n(n-2)(n-3)\beta^2\right)}}\right)^{n-2}\Bigg)\\- (n-1)(n-3)
	\end{multline*}
	
\end{proof}

It is of note that the bounds of $\varepsilon$ from the $\beta$-strict separability condition as well as the condition that the of the minimum ball contained in the probability simplex we have the following result
\begin{lemma}\label{lem:epslowbnd}
Consider a IRL problem and reward pair constructed as in Equation \ref{eq:constr} from an $(n-1,N,\cos\theta)$ spherical code such that Lemma \ref{lem:theta} holds. Then
\begin{equation}
\label{eq:eps_bnd}
\frac{1}{\sqrt{(n-1)(n)}}\ge\varepsilon \ge \sqrt{n-2} \beta
\end{equation}
and the lower bound corresponds to a $n-1$ simplex.
\end{lemma}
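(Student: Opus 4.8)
The statement bundles two inequalities with quite different origins, so the plan is to treat them separately. The upper bound $\varepsilon \le 1/\sqrt{n(n-1)}$ is forced by the requirement that each $P_{a^i_2}$ actually be a transition matrix, while the lower bound $\varepsilon \ge \sqrt{n-2}\,\beta$ is forced by the requirement that the spherical code on $\mathcal{S}^{n-2}$ contain enough points for the facet construction to be non-degenerate, combined with the exact relation between $\theta$, $\varepsilon$ and $\beta$ supplied by Lemma \ref{lem:theta}.

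For the upper bound, recall that $\Pi$ is a rotation, so it preserves the Euclidean norm; hence each row $P_{a_1}(j) - P_{a^i_2}(j) = (\Pi^T[p^i_j;0])^T$ has $\ell_2$-norm $\lVert p^i_j\rVert_2 = \varepsilon$ and lies in $H_n = \{x : \sum_i x_i = 0\}$. Since $P_{a_1}(j)$ is the uniform vector $(1/n,\dots,1/n)$, the vector $P_{a^i_2}(j)$ is the uniform distribution displaced by a vector of norm $\varepsilon$ within the affine hyperplane $\{x : \sum_i x_i = 1\}$. For this to be a legitimate (entrywise nonnegative) probability vector for every facet-normal direction arising in the construction, the closed $\ell_2$-ball of radius $\varepsilon$ about $(1/n,\dots,1/n)$ inside that affine hyperplane must be contained in $\{x \in \mathbb{R}^n : x_i\ge 0,\ \sum_i x_i = 1\}$. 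The extremal such ball is the inscribed ball, whose radius is the distance from the centroid to a facet $\{x_k = 0\}$. I would compute this distance directly: the unit normal to that facet inside $\{\sum_i x_i = 1\}$ is proportional to $e_k - \tfrac{1}{n-1}\sum_{i\ne k} e_i$, of norm $\sqrt{n/(n-1)}$, and travelling from the centroid until $x_k$ reaches $0$ uses parameter $1/n$, giving inradius $\tfrac1n\sqrt{n/(n-1)} = 1/\sqrt{n(n-1)}$. Hence $\varepsilon \le 1/\sqrt{n(n-1)}$.

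For the lower bound, start from Lemma \ref{lem:theta}, which gives $\sin^2(\theta/2) = n(n-1)(n-2)\beta^2 / (2\varepsilon^2 + 2n(n-2)^2\beta^2)$. The construction produces facets $\mathcal{Y}^i$ only if the maximal spherical code on $\mathcal{S}^{n-2}$ has at least $n$ points, so that its convex hull is full-dimensional in $\mathbb{R}^{n-1}$; with fewer than $n$ points there is no simplicial polytope to decompose. A short linear-algebra argument ($0 \le \lVert\sum_\ell v_\ell\rVert_2^2 = m + \sum_{\ell\ne m'}\langle v_\ell,v_{m'}\rangle \le m + m(m-1)\cos\theta$) shows a set of $m$ unit vectors with pairwise inner product at most $\cos\theta < 0$ has $m \le 1 - 1/\cos\theta$, so the requirement $N \ge n$ forces $\cos\theta \ge -1/(n-1)$, equivalently $\sin^2(\theta/2) \le \tfrac{n}{2(n-1)}$. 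Substituting the Lemma \ref{lem:theta} expression and clearing denominators reduces, after the cancellation $(n-1)^2 - n(n-2) = 1$, to $\varepsilon^2 \ge (n-2)\beta^2$; and equality holds exactly when $\cos\theta = -1/(n-1)$, the angle realized by the $n$ vertices of a regular simplex inscribed in $\mathcal{S}^{n-2}$, an $(n-1)$-dimensional simplex --- which is the sense in which "the lower bound corresponds to an $n-1$ simplex."

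The main obstacle, I expect, is not the arithmetic but pinning down the two modeling claims that make the bounds necessary rather than merely sufficient: first, that the facet normals $p^i_j$ occurring in the construction really do force the worst-case (insphere) direction, so $\varepsilon \le 1/\sqrt{n(n-1)}$ is genuinely required; and second, that "enough points in the spherical code" should be read as $N \ge n$ (full-dimensionality of the polytope) rather than the weaker $N \ge n-1$ (enough to name a single facet), since it is the former that yields the clean extremal value $\cos\theta = -1/(n-1)$ and the regular-simplex interpretation. Once those are granted, both halves follow from the elementary simplex geometry and the negative-correlation vector bound above.
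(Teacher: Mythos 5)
Your proof is correct and follows essentially the same route as the paper's: the upper bound is the inradius of the probability simplex centered at $(\tfrac1n,\dots,\tfrac1n)$, and the lower bound comes from requiring the spherical code to have at least $n$ points, which forces $\cos\theta \ge -\tfrac{1}{n-1}$ and, via the relation in Lemma \ref{lem:theta}, yields $\varepsilon^2 \ge (n-2)\beta^2$ with equality at the regular $(n-1)$-simplex. The only difference is in justification of the key geometric fact: the paper invokes Balinski's theorem to conclude $N\ge n$ and then asserts the simplex is extremal, whereas you derive $\cos\theta \ge -\tfrac{1}{n-1}$ explicitly from $0 \le \lVert\sum_\ell v_\ell\rVert_2^2$, which is if anything more self-contained.
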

\begin{proof}
	The upper bound 
	$$
	\frac{1}{\sqrt{(n-1)(n)}}\ge\varepsilon
	$$
	is straightforwardly obtained from the condition that $P_{a_2^i}(j)$ is contained in a ball of radius $\varepsilon$ around $P_{a_1}(j)$ which is located at the center of the probability simplex. The bound represents the maximum radius ball that fits within the $n-1$ probability simplex with side length $\sqrt{2}$.
	
	The minimum can be found by noticing that the convex $n-1$ dimensional polytope in $\mathbb{R}^{n-1}$ formed by the maximal spherical code must be at least $n-1$-vertex-connected by Balinski's theorem \cite{balinski1961graph}. Thus the minimum number of vertices (which give the minimum possible $\cos \theta$) must be $n$. This minimum is achieved in the form of an $n-1$ simplex with vertices on $\mathcal{S}^{n-2}$ with $\theta = \cos^{-1}\frac{-1}{n-1}$
	
	From this minimum case we have 
	$$
	\cos\theta = \frac{\varepsilon^2-n(n-2)\beta^2}{\varepsilon^2 + n(n-2)^2\beta^2} \ge -\frac{1}{n-1}
	$$
	rearranging and simplifying gives
	$$
	\implies \varepsilon^2 \ge (n-2)\beta^2
	$$
	The solution to which gives the lower bound of the lemma

\end{proof}

Now we apply the result from  Equation (3) of \cite{borade2008euclidean} to bound the KL divergence $D$ of two columns of the transition probability matrices $P$  and $Q$ from two different problems where the columns lie within a ball of radius $\varepsilon$ around $P_{a1}(i)=\begin{bmatrix}
\frac{1}{n} & \frac{1}{n} & \ldots \frac{1}{n} 
\end{bmatrix}$
\begin{lemma} \label{lem:klcol}
	Let $P(i)$ and $Q(j)$ be the columns $i$ and $j$ of the transition probability matrices of such problem-reward pairs as described above constructed using Equation \ref{eq:constr}. Then 
	$$
	D(P(i)||Q(j))\le  \frac{2\varepsilon^2 n}{1-n\varepsilon}
	$$ 
\end{lemma}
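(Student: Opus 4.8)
The plan is to reduce the claim to the ``Euclidean information theory'' estimate of \cite{borade2008euclidean} (their Equation (3)), which for two sufficiently nearby probability vectors $p,q$ on $n$ letters bounds $D(p\|q)$ by (a constant times) the $\chi^2$-type quantity $\sum_\ell (p_\ell-q_\ell)^2/q_\ell$ --- roughly $D(p\|q)\le\tfrac12\sum_\ell(p_\ell-q_\ell)^2/q_\ell$ in the close-distributions regime. So it suffices to supply two ingredients about the probability vectors $P(i),Q(j)$ produced by the construction of Equation \ref{eq:constr}: an upper bound on $\|P(i)-Q(j)\|_2^2$ in terms of $\varepsilon^2$, and a lower bound on the entries of $Q(j)$.

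For the distance, recall from the construction that each row of any ensemble transition matrix $P_{a_2^i}$ satisfies $P_{a_1}(j)-P_{a_2^i}(j)=\bigl(\Pi^{T}[\,p^i_j;0\,]\bigr)^{T}$ with $\Pi$ a rotation and $\|p^i_j\|_2=\varepsilon$ (the last row reuses $p^i_1$), so every such row lies at Euclidean distance exactly $\varepsilon$ from the uniform vector $u=(\tfrac1n,\dots,\tfrac1n)$. Hence $\|P(i)-Q(j)\|_2\le\|P(i)-u\|_2+\|u-Q(j)\|_2\le 2\varepsilon$, i.e.\ $\sum_\ell(P(i)_\ell-Q(j)_\ell)^2\le 4\varepsilon^2$. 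For the entries, $\min_\ell Q(j)_\ell\ge\tfrac1n-\|Q(j)-u\|_\infty\ge\tfrac1n-\|Q(j)-u\|_2\ge\tfrac1n-\varepsilon$, which is positive in the range $\varepsilon<1/n$ relevant here (compatible with the bounds on $\varepsilon$ in Lemma \ref{lem:epslowbnd}).

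Combining,
\[
D\bigl(P(i)\,\|\,Q(j)\bigr)\ \le\ \frac12\cdot\frac{\sum_\ell(P(i)_\ell-Q(j)_\ell)^2}{\displaystyle\min_\ell Q(j)_\ell}\ \le\ \frac12\cdot\frac{4\varepsilon^2}{\tfrac1n-\varepsilon}\ =\ \frac{2\varepsilon^2 n}{1-n\varepsilon},
\]
which is the stated bound. The only delicate point is that Equation (3) of \cite{borade2008euclidean} is an estimate for \emph{close} distributions, so one has to check that the pair $P(i),Q(j)$ is within the regime where it is valid (this is where $\varepsilon<1/n$ is used) and to track the leading constant carefully so that the final factor is $2$ and not $4$; once the cited inequality is in hand, the rest is the one-line substitution above.
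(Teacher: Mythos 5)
Your proposal is correct and follows essentially the same route as the paper's own proof: both invoke Equation (3) of \cite{borade2008euclidean} to bound the KL divergence by half the weighted squared distance, bound the numerator by $4\varepsilon^2$ using that both rows lie in the $\varepsilon$-ball around the uniform vector, and bound the denominator below by $\tfrac1n-\varepsilon$. The only (immaterial) differences are that the paper normalizes by the entries of $P(i)$ rather than $Q(j)$, and that you are somewhat more explicit than the paper about the ``close distributions'' regime in which the cited inequality applies.
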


\begin{theorem}\label{thm:kltraj}
	Let $P$ and $Q$ be two transition probability matrices of such problem-reward pairs as described above constructed using Equation \ref{eq:constr}. Consider the $m$-length trajectories drawn from each transition probability and let $p^{(m)}$ and $q^{(m)}$ represent the corresponding probability distributions of the trajectory $m$-tuples. Furthermore, let the trajectories start in each state with equal probability. Then we have
	
	$$
	D(p^{(m)}||q^{(m)}) \le (m-1) \frac{2 \varepsilon^2 n}{1-n\varepsilon}
	$$
	
\end{theorem}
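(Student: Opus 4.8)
The plan is to exploit the Markov structure of the trajectories and tensorize the KL divergence via the chain rule for relative entropy. Write a trajectory as the tuple $(s_1,\dots,s_m)$. Since the starting state is drawn uniformly under both problems, the density of the $m$-tuple under $P$ factorizes as $p^{(m)}(s_1,\dots,s_m) = \tfrac{1}{n}\prod_{t=1}^{m-1} P(s_t,s_{t+1})$, and similarly $q^{(m)}(s_1,\dots,s_m) = \tfrac{1}{n}\prod_{t=1}^{m-1} Q(s_t,s_{t+1})$, where $P(s_t,s_{t+1})$ is the one-step transition probability from $s_t$ to $s_{t+1}$. The point is that $p^{(m)}$ and $q^{(m)}$ share the same initial marginal and differ only through the $m-1$ one-step kernels.

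Next I would apply the chain rule,
\[
D(p^{(m)}\|q^{(m)}) = D(p^{(1)}\|q^{(1)}) + \sum_{t=2}^{m} \mathbb{E}_{s_{1:t-1}\sim p^{(t-1)}}\!\left[D\!\left(p(s_t\mid s_{1:t-1})\,\big\|\,q(s_t\mid s_{1:t-1})\right)\right].
\]
The first term vanishes because both initial distributions equal the uniform distribution on $S$. By the Markov property, the conditional law of $s_t$ given $s_{1:t-1}$ depends only on $s_{t-1}$, and under $p$ (resp.\ $q$) it is precisely the transition distribution from state $s_{t-1}$ that appears in Lemma \ref{lem:klcol}; hence the $t$-th summand equals $\mathbb{E}_{s_{t-1}\sim p^{(t-1)}}\!\left[D\!\left(P(s_{t-1})\,\|\,Q(s_{t-1})\right)\right]$.

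Finally I would invoke Lemma \ref{lem:klcol}, which bounds $D(P(i)\|Q(i)) \le \tfrac{2\varepsilon^2 n}{1-n\varepsilon}$ uniformly over the conditioning state $i$. Since this bound is independent of $i$, each of the $m-1$ inner expectations is bounded by $\tfrac{2\varepsilon^2 n}{1-n\varepsilon}$, and summing the $t=2,\dots,m$ terms yields $D(p^{(m)}\|q^{(m)}) \le (m-1)\tfrac{2\varepsilon^2 n}{1-n\varepsilon}$, as claimed.

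The main obstacle here is bookkeeping rather than a genuine difficulty: one must set up the chain-rule decomposition over tuples carefully, verify that the Markov property collapses each conditional density to a single transition row so that Lemma \ref{lem:klcol} applies verbatim, and handle the common initial marginal so that the base term is zero. An alternative route that sidesteps the explicit conditioning is to observe that $p^{(m)}$ and $q^{(m)}$ are pushforwards of the same source distribution through $m-1$ Markov kernels and apply the tensorization/superadditivity inequality for KL divergence of Markov kernels directly, obtaining the same bound.
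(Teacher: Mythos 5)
Your proof is correct and follows essentially the same route as the paper: the chain-rule decomposition you write down is exactly the formula $D(p^{(m)}\|q^{(m)}) = p(I+P+\cdots+P^{m-2})V + D(p\|q)$ from Rached et al.\ that the paper invokes, with Lemma \ref{lem:klcol} supplying the per-row bound $V \preceq \mathds{1}\tfrac{2\varepsilon^2 n}{1-n\varepsilon}$. Your observation that the bound is uniform over the conditioning state actually streamlines the argument: since $pP^t$ is a probability vector for every $t$, one gets $pP^tV \le \tfrac{2\varepsilon^2 n}{1-n\varepsilon}$ immediately, making the paper's explicit computation (writing $P = P_{a_1}+\varepsilon U$ to show the marginals stay uniform) unnecessary for the stated bound.
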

\begin{proof}
	We use an approach based on Theorem 1 of \cite{rached2004kullback}.
	Notice that since the columns of $P$ and $Q$ are within the $\varepsilon$ ball around $\begin{bmatrix}
		\frac{1}{n} & \frac{1}{n} & \ldots \frac{1}{n} 
	\end{bmatrix}$ and $\varepsilon < \frac{1}{n}$, none of the elements of $P$ and $Q$ are 0. Thus we have $P$ is absolutely continuous with respect to $Q$.
	
	Now we have
	
	$$
	D(p^{(m)}||q^{(m)}) = p(I+P+P^2 + \ldots + P^{m-2})V + D(p||q)
	$$
	Where $p = q = \begin{bmatrix}
		\frac{1}{n} & \frac{1}{n} & \ldots \frac{1}{n} 
	\end{bmatrix}$ represent the initial distributions of states for the trajectories and $V$ is given by
	$$
	V = \begin{bmatrix}
	D(P(1)||Q(1)) \\ D(P(2)||Q(2)) \\ \vdots \\ D(P(n)||Q(n))
	\end{bmatrix} \le \mathds{1}\frac{2 \varepsilon^2 n}{1-n\varepsilon} \;\;\; (\text{by Lemma \ref{lem:klcol} })
	$$
	
	Also notice that $D(p||q) = 0$. By construction of $P$ and $Q$, we can write $P$ as 
	$
	P = P_{a_1} + \varepsilon U
	$. Since $P$ is also a transition probability matrix, $U$ will be a matrix whose columns are unit vectors whose components sum to 0. That is
	$
	\mathds{1}^T U = \begin{bmatrix}
	1 & 1& \ldots & 1
	\end{bmatrix}U   = 0^T
	$
	$
	\implies P_{a_1} U = 0 I
	$

	Substituting into the expression for $	D(p^{(m)}||q^{(m)})$, and simplifying, we get

	\begin{multline*}
	D(p^{(m)}||q^{(m)}) = \begin{bmatrix}
\frac{1}{n} & \frac{1}{n}& \ldots & \frac{1}{n}
\end{bmatrix} \bigg(I+P_{a_1} + \varepsilon U +\\\left(P_{a_1} + \varepsilon U\right)^2 + \ldots + \left(P_{a_1} + \varepsilon U\right)^{m-2}\bigg)V
	\end{multline*}
	
	Since every term with $U$ will either be multiplied with $P_{a_1}$ or $\begin{bmatrix}
		\frac{1}{n} & \frac{1}{n}& \ldots & \frac{1}{n}
	\end{bmatrix}$ from the left, all of the terms with $U$ will end up being 0.
	
	$$
	\begin{aligned}
	\implies D(p^{(m)}||q^{(m)}) &= \begin{bmatrix}
	\frac{1}{n} & \frac{1}{n}& \ldots & \frac{1}{n}
	\end{bmatrix} \\ &\qquad\left(I+(m-2)P_{a_1}\right) V\\
	&\hspace{-1.2in}\le  \begin{bmatrix}
	\frac{1}{n} & \frac{1}{n}& \ldots & \frac{1}{n}
	\end{bmatrix} \left(I+(m-2)P_{a_1}\right)\mathds{1} \frac{2 \varepsilon^2 n}{1-n\varepsilon}\\
	&\hspace{-1.2in}= \begin{bmatrix}
	\frac{1}{n} & \frac{1}{n}& \ldots & \frac{1}{n}
	\end{bmatrix} \mathds{1}(1+(m-2)) \frac{2 \varepsilon^2 n}{1-n\varepsilon} \\
	&\hspace{-1.2in}= (m-1)\frac{2 \varepsilon^2 n}{1-n\varepsilon}
	\end{aligned}
	$$
	 
\end{proof}

Notice that in the case of multiple actions, if we assume the actions are chosen randomly, thereby creating an extended $P$ and $Q$ matrix for state transition of dimensions $nk \times nk$, the KL bound as well as the result of Theorem \ref{thm:kltraj} do not change.

We will now use Fano's inequality to form the lower bound.
\begin{theorem}\label{thm:fano1}
	Consider the set of constructed problem reward pairs $F = \left\lbrace\mathcal{F}^i, R^i\right\rbrace$ constructed as described above. Let $\mathcal{F}$ be uniform on $F$. Let $Z$ represent the $m$ sample trajectory generated from $\mathcal{F}$ and let $\hat{\mathcal{F}}$ be an estimator of $\mathcal{F}$ from $Z$. Then for any Markov chain $\mathcal{F}\rightarrow Z\rightarrow \hat{\mathcal{F}}$, we have
	$$
	\mathbb{P}(\hat{\mathcal{F}}\ne\mathcal{F})\ge 1- \frac{(m-1)\frac{2 \varepsilon^2 n}{1-n\varepsilon} + \log 2}{\log \eta}
	$$
	where
	\begin{multline*}
\eta = (n-2) \Bigg( \sqrt{2\pi(n-1)}\frac{\varepsilon^2-n(n-2)\beta^2}{\varepsilon^2 + n(n-2)^2\beta^2}\times\\
	\left(\frac{\varepsilon^2 + n(n-2)^2\beta^2}{\beta\sqrt{n(n-1)(n-2)\left(2\varepsilon^2 +n(n-2)(n-3)\beta^2\right)}}\right)^{n-2}\Bigg)\\- (n-1)(n-3)
	\end{multline*}
	
\end{theorem}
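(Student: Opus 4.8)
The statement is an assembly of the quantitative estimates already proved --- the trajectory KL bound of Theorem~\ref{thm:kltraj} and the ensemble cardinality bound of Theorem~\ref{thm:facet_bound} --- fed into Fano's inequality. First I would invoke Fano's inequality in its standard information-theoretic form: for any Markov chain $\mathcal{F}\to Z\to\hat{\mathcal{F}}$ with $\mathcal{F}$ uniform over the finite family $F=\{\mathcal{F}^i,R^i\}$ of cardinality $|F|$,
$$
\mathbb{P}(\hat{\mathcal{F}}\ne\mathcal{F}) \ge 1 - \frac{I(\mathcal{F};Z)+\log 2}{\log|F|},
$$
so the whole proof reduces to (i) upper bounding the mutual information $I(\mathcal{F};Z)$ and (ii) lower bounding $\log|F|$.

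For step (i), write $p^{(m)}_i$ for the law of the $m$-step trajectory $Z$ under problem $\mathcal{F}^i$ started from the uniform distribution over states, and $\bar p^{(m)}=\frac{1}{|F|}\sum_j p^{(m)}_j$ for its mixture. Then $I(\mathcal{F};Z)=\frac{1}{|F|}\sum_i D(p^{(m)}_i \| \bar p^{(m)})$, and by convexity of the KL divergence in its second argument $D(p^{(m)}_i \| \bar p^{(m)})\le\frac{1}{|F|}\sum_j D(p^{(m)}_i \| p^{(m)}_j)$, so that $I(\mathcal{F};Z)\le\max_{i\ne j}D(p^{(m)}_i \| p^{(m)}_j)$. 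Applying Theorem~\ref{thm:kltraj} to each pair then gives $I(\mathcal{F};Z)\le(m-1)\frac{2\varepsilon^2 n}{1-n\varepsilon}$; the remark following Theorem~\ref{thm:kltraj} shows this estimate is unchanged when the actions are chosen at random, so the argument also covers the multi-action case verbatim.

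For step (ii), each distinct facet $\mathcal{Y}^i$ of the simplicial polytope yields a distinct problem--reward pair, so $|F|=|\mathcal{Y}|$, and Theorem~\ref{thm:facet_bound} --- whose hypothesis is precisely the $\theta$--$\beta$--$\varepsilon$ relation of Lemma~\ref{lem:theta} --- lower bounds $|\mathcal{Y}|$ by $\eta$ (dropping the $(1+o(1))$ factor in the definition of $\eta$). By monotonicity of the logarithm $\log|F|\ge\log\eta$, and substituting both estimates into Fano's inequality yields the claimed bound on $\mathbb{P}(\hat{\mathcal{F}}\ne\mathcal{F})$. The remaining points are pure bookkeeping: one should record that $\eta>1$ so that $\log\eta>0$ and the bound is non-vacuous, which uses the admissible range of $\varepsilon$ from Lemma~\ref{lem:epslowbnd} and the condition $\varepsilon<1/n$ already imposed in Theorem~\ref{thm:kltraj}; and one must use the mixture (averaging) form of the mutual-information bound rather than a cruder union-type estimate. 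I expect no genuine obstacle --- the theorem is essentially a packaging of the earlier lemmas --- with the mutual-information-to-pairwise-KL reduction being the only step that requires a little attention.
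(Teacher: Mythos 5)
Your proposal is correct and follows essentially the same route as the paper: Fano's inequality, the mutual-information bound $I(\mathcal{F};Z)\le\max_{i\ne j}D(p^{(m)}_i\|p^{(m)}_j)$ combined with Theorem \ref{thm:kltraj}, and the cardinality bound $|F|=|\mathcal{Y}|\ge\eta$ from Theorem \ref{thm:facet_bound}. Your explicit justification of the mutual-information-to-pairwise-KL step via the mixture representation and convexity is a welcome elaboration of a step the paper states without proof, but it is the same argument, not a different one.
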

\begin{proof}
	We start with Fano's inequality
	$$
	\mathbb{P}(\hat{\mathcal{F}}\ne\mathcal{F})\ge 1- \frac{I(\mathcal{F};Z) + \log 2}{\log |F|}
	$$
	
	Notice that
	$$
	\begin{aligned}
	I(\mathcal{F};Z) &\le \max_{\mathcal{F},\mathcal{F}'} D\left(P_{Z|\mathcal{F}}(\cdot|\mathcal{F})||P_{Z|\mathcal{F}}(\cdot|\mathcal{F}')\right)\\
	&= \max_{p^{(m)},q^{(m)}}D(p^{(m)}||q^{(m)})\\
	&\le (m-1)\frac{2 \varepsilon^2 n}{1-n\varepsilon} \;\;\; \text{(by Theorem \ref{thm:kltraj})}
	\end{aligned}
	$$
	
	We also know that the number of such problem-reward pairs $|F|$ is  the number of facets $|\mathcal{Y}|$. From Theorem \ref{thm:facet_bound}, we have
	
	\begin{multline*}
	|F| \ge (n-2) \Bigg( \sqrt{2\pi(n-1)}\frac{\varepsilon^2-n(n-2)\beta^2}{\varepsilon^2 + n(n-2)^2\beta^2}\times\\
	\left(\frac{\varepsilon^2 + n(n-2)^2\beta^2}{\beta\sqrt{n(n-1)(n-2)\left(2\varepsilon^2 +n(n-2)(n-3)\beta^2\right)}}\right)^{n-2}\Bigg)\\- (n-1)(n-3)
	\end{multline*}
	
	 Substituting these results in Fano's inequality gives us the result of the Theorem.
\end{proof}

\begin{corollary}\label{cor:fano2}
		Consider the set of constructed problem reward pairs $F = \left\lbrace\mathcal{F}^i, R^i\right\rbrace$ constructed as described above. Let $\mathcal{F}$ be uniform on $F$. Let $n$ be large and consider the case 
		$$
		\frac{1}{\sqrt{2n(n-1)}}=\varepsilon = \sqrt{n-2} \beta
		$$
		Let $Z$ represent the $m$ sample trajectory generated from $\mathcal{F}$ and let $\hat{\mathcal{F}}$ be an estimator of $\mathcal{F}$ from $Z$ with
		$$
		m \le (n-1) (0.5 \log n -\log 2)\left(1-\sqrt{\frac{n}{2(n-1)}}\right)+1
		$$
		 Then for any Markov chain $\mathcal{F}\rightarrow Z\rightarrow \hat{\mathcal{F}}$, we have
	$$
	\mathbb{P}(\hat{\mathcal{F}}\ne\mathcal{F})\ge 0.5
	$$
	
\end{corollary}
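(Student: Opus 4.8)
The plan is to specialize the ensemble of Section~\ref{sec:GeoCon} to the boundary regime $\varepsilon=\sqrt{n-2}\,\beta$ and then apply the elementary form of Fano's inequality directly, rather than invoking Theorem~\ref{thm:fano1} verbatim: in this regime $\cos\theta<0$, so the generic cardinality expression $\eta$ of Theorem~\ref{thm:fano1} degenerates, and the honest quantity to place in Fano's denominator is the actual ensemble size.

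First I would pin down the ensemble cardinality. By Lemma~\ref{lem:epslowbnd} the value $\varepsilon=\sqrt{n-2}\,\beta$ is the extreme admissible one, and (as in that lemma's proof, or via Lemma~\ref{lem:theta} with $\sin^2(\theta/2)=n/(2(n-1))$) it forces $\cos\theta=-1/(n-1)$. At this minimal angle the maximal spherical code on $\mathcal{S}^{n-2}$ is the vertex set of the regular $(n-1)$-simplex, which has exactly $n$ facets; hence $|F|=|\mathcal{Y}|\ge n$ (equivalently, substitute $N\ge n$ into Eq.~\ref{eq:nfacets}). This, together with requiring $0.5\log n-\log 2>0$, is the only role of the ``$n$ large'' hypothesis.

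Next I would substitute $\varepsilon=1/\sqrt{2n(n-1)}$ into the per-step divergence bound of Theorem~\ref{thm:kltraj}. A short computation gives $2\varepsilon^2 n=1/(n-1)$ and $n\varepsilon=\sqrt{n/(2(n-1))}$ (in particular $n\varepsilon<1$ for $n>2$, so Lemma~\ref{lem:klcol} and Theorem~\ref{thm:kltraj} apply and the denominator below is positive), so that
\[
\frac{2\varepsilon^2 n}{1-n\varepsilon}=\frac{1}{(n-1)\bigl(1-\sqrt{n/(2(n-1))}\bigr)}.
\]
As in the proof of Theorem~\ref{thm:fano1}, $I(\mathcal{F};Z)\le (m-1)\,\frac{2\varepsilon^2 n}{1-n\varepsilon}$. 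Finally, combining $\mathbb{P}(\hat{\mathcal{F}}\ne\mathcal{F})\ge 1-(I(\mathcal{F};Z)+\log 2)/\log|F|$ with $\log|F|\ge\log n$, the right-hand side is at least $0.5$ whenever $(m-1)\frac{2\varepsilon^2 n}{1-n\varepsilon}+\log 2\le 0.5\log n$; solving this for $m$ and inserting the coefficient just computed yields exactly
\[
m\le (n-1)(0.5\log n-\log 2)\Bigl(1-\sqrt{n/(2(n-1))}\Bigr)+1,
\]
which is the stated hypothesis. Hence that hypothesis implies $\mathbb{P}(\hat{\mathcal{F}}\ne\mathcal{F})\ge 0.5$.

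The main obstacle is the first step: recognizing that in this boundary regime the construction degenerates to the regular simplex so that $|F|\ge n$ and $\log n$ is the correct quantity for Fano's denominator (the blanket formula of Theorem~\ref{thm:fano1} is vacuous here). Once that is settled, the remainder is the algebraic substitution of $\varepsilon$ and a rearrangement of Fano's bound, with the ``large $n$'' assumption only ensuring positivity of the various factors.
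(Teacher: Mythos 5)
Your proposal is correct and follows essentially the same route as the paper: identify the boundary case $\varepsilon=\sqrt{n-2}\,\beta$ with the regular $(n-1)$-simplex so that $|F|=n$, substitute $\varepsilon=1/\sqrt{2n(n-1)}$ into the per-trajectory KL bound to get $\tfrac{2\varepsilon^2 n}{1-n\varepsilon}=\tfrac{1}{(n-1)(1-\sqrt{n/(2(n-1))})}$, and plug both into Fano's inequality so the stated bound on $m$ makes the right-hand side exactly $0.5$. Your added remarks (that the generic $\eta$ of Theorem~\ref{thm:fano1} is vacuous here since $\cos\theta<0$, and that $1-n\varepsilon>0$) are correct observations the paper leaves implicit, but they do not change the argument.
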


This result gives us the lower bound for the sample complexity in the case of large $n$ on the order of $O(n \log n)$.

A similar result can be found for the case where just the lower bound of $\varepsilon \ge \sqrt{n-2}\beta$ is satisfied, which from Lemma \ref{lem:epslowbnd} results in the case of an $n-1$ simplex.

\begin{corollary}\label{cor:fano3}
	Consider the set of constructed problem reward pairs $F = \left\lbrace\mathcal{F}^i, R^i\right\rbrace$ constructed as described above. Let $\mathcal{F}$ be uniform on $F$. Let $n$ be large and consider the case 
	$$
	\frac{1}{\sqrt{2n(n-1)}}\ge\varepsilon = \sqrt{n-2} \beta
	$$
	Let $Z$ represent the $m$ sample trajectory generated from $\mathcal{F}$ and let $\hat{\mathcal{F}}$ be an estimator of $\mathcal{F}$ from $Z$ with
	$$
	m \le  \frac{(0.5 \log n -\log 2)}{2(n-2)n\beta^2}\left(1-n\sqrt{n-2}\beta\right)+1
	$$
	Then for any Markov chain $\mathcal{F}\rightarrow Z\rightarrow \hat{\mathcal{F}}$, we have
	$$
	\mathbb{P}(\hat{\mathcal{F}}\ne\mathcal{F})\ge 0.5
	$$
	
\end{corollary}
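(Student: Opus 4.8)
The plan is to specialize Theorem~\ref{thm:fano1} to the boundary regime $\varepsilon = \sqrt{n-2}\,\beta$, which by Lemma~\ref{lem:epslowbnd} is exactly the case in which the spherical code degenerates to the regular $(n-1)$-simplex inscribed in $\mathcal{S}^{n-2}$ (with $\cos\theta = -1/(n-1)$). In this case the facet-counting bound of Theorem~\ref{thm:facet_bound} can be bypassed entirely: a simplex on $n$ vertices has exactly $n$ facets, each of which is the $(n-2)$-simplex on the remaining $n-1$ vertices, so the ensemble has size $|F| = |\mathcal{Y}| = n$ and $\log|F| = \log n$. (Pairwise these facets meet in exactly $n-2$ vertices, consistent with the requirement $|\mathcal{Y}^i\cap\mathcal{Y}^j|\le n-2$ imposed in the construction.)

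Next I would substitute $|F| = n$, together with the trajectory KL bound $D(p^{(m)}\|q^{(m)}) \le (m-1)\,\frac{2\varepsilon^2 n}{1-n\varepsilon}$ from Theorem~\ref{thm:kltraj}, into Fano's inequality, obtaining
$$
\mathbb{P}(\hat{\mathcal{F}}\ne\mathcal{F}) \;\ge\; 1 - \frac{(m-1)\frac{2\varepsilon^2 n}{1-n\varepsilon} + \log 2}{\log n}.
$$
Imposing that the right-hand side be at least $0.5$ and solving the resulting linear inequality for $m$ gives $m - 1 \le \frac{(0.5\log n - \log 2)(1-n\varepsilon)}{2\varepsilon^2 n}$; substituting $\varepsilon^2 = (n-2)\beta^2$ and $n\varepsilon = n\sqrt{n-2}\,\beta$ then yields precisely the stated upper bound on $m$, at which point $\mathbb{P}(\hat{\mathcal{F}}\ne\mathcal{F}) \ge 0.5$.

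The only points needing care --- and the mild obstacle --- are checking that the chain of inequalities is valid in the claimed ``large $n$'' regime: the hypothesis $\varepsilon \le \frac{1}{\sqrt{2n(n-1)}}$ forces $n\varepsilon < 1$ for $n > 2$, so $1 - n\varepsilon > 0$ (so the KL bound is finite and, as required by Theorem~\ref{thm:kltraj}, $P$ and $Q$ have no zero entries), and for $n$ sufficiently large both $0.5\log n - \log 2 > 0$ and $1 - n\sqrt{n-2}\,\beta > 0$, so dividing by $\frac{2\varepsilon^2 n}{1-n\varepsilon}$ preserves the direction of the inequality and the asserted bound on $m$ is a genuine positive constraint. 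I do not anticipate any deeper difficulty here; the substance is just recognizing that the degenerate simplex case replaces the complicated $\eta$ of Theorem~\ref{thm:fano1} with the clean value $n$, after which the corollary is a short algebraic rearrangement.
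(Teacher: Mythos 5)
Your proposal is correct and follows essentially the same route as the paper's own proof: both identify the $\varepsilon=\sqrt{n-2}\,\beta$ case with the regular $(n-1)$-simplex via Lemma~\ref{lem:epslowbnd}, set $|F|=n$, and substitute into the Fano bound of Theorem~\ref{thm:fano1} before rearranging for $m$. Your added sign checks (that $1-n\varepsilon>0$ and that the constraint on $m$ is nonvacuous for large $n$) are implicit in the paper but worth making explicit.
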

 
 This result gives us the lower bound for the sample complexity in the case of large $n$ on the order of $O(\frac{ \log n}{n^2\beta^2})$. The results of our experimental validation of this bound for different values of $n$ and various solution methods is described in Section \ref{sec:exp} and can be seen in Figure \ref{fig:exp77} and Figure \ref{fig:exp55}.
 
 \begin{figure*}[!htb]
 	\centering
 	
 	\includegraphics[width=1.1\linewidth]{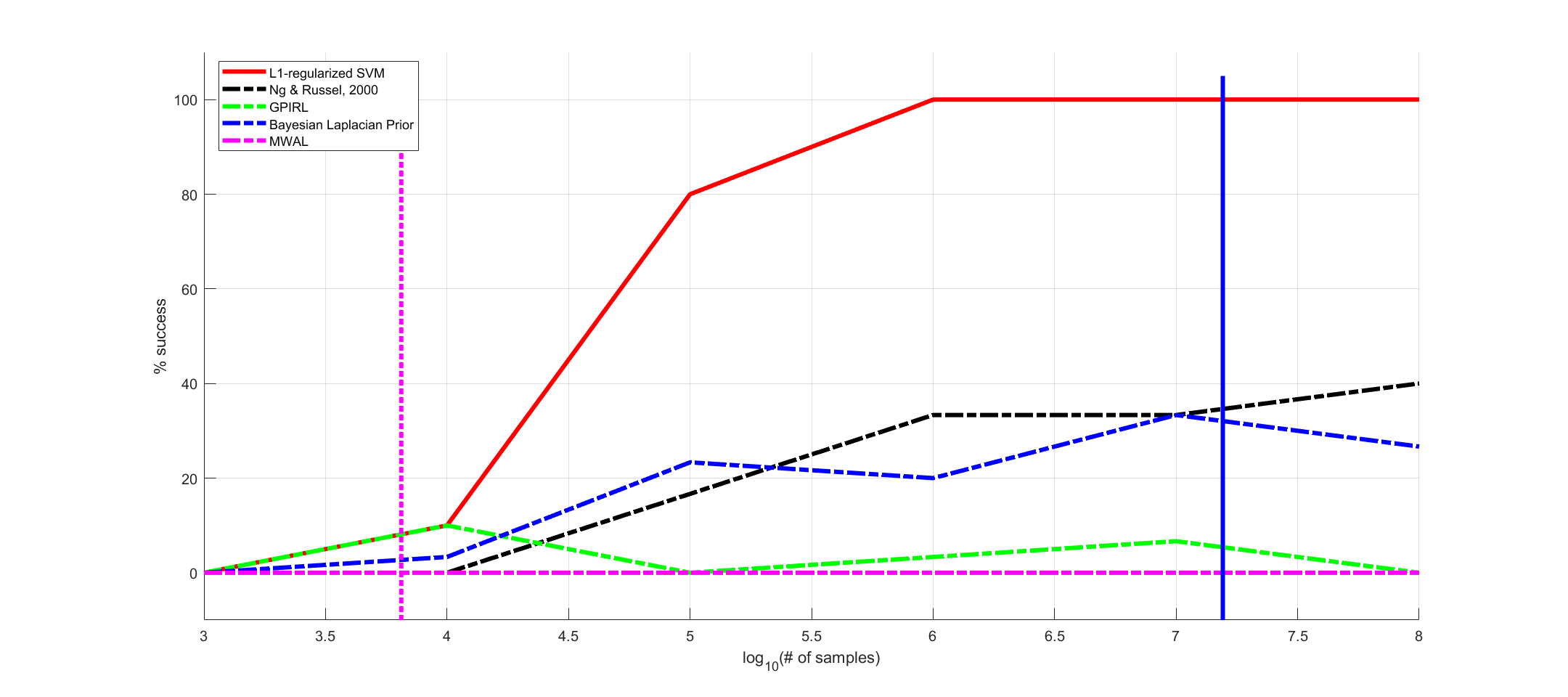}

 	\caption{ Empirical probability of success versus log number of samples for an IRL problem with $n=7$ states, $k=7$ actions, $\gamma  = 0.1$ and $\beta \approx 0.0032$) using the L1-regularized SVM \cite{komanduru2019correctness}, the method of \cite{IRLNg}, Multiplicative Weights for Apprenticeship Learning from \cite{syed2008apprenticeship}, Bayesian IRL with Laplacian prior from \cite{RamaBayes} and Gaussian Process IRL from \cite{levine2011nonlinear}. The vertical magenta line represents the lower bound sample complexity from Corollary \ref{cor:fano3}. The vertical blue line represents the sample complexity upper bound from \cite{komanduru2019correctness}.} \label{fig:exp77}
 \end{figure*}

\begin{figure*}[!htb]
	\centering
	
	\includegraphics[width=1.1\linewidth]{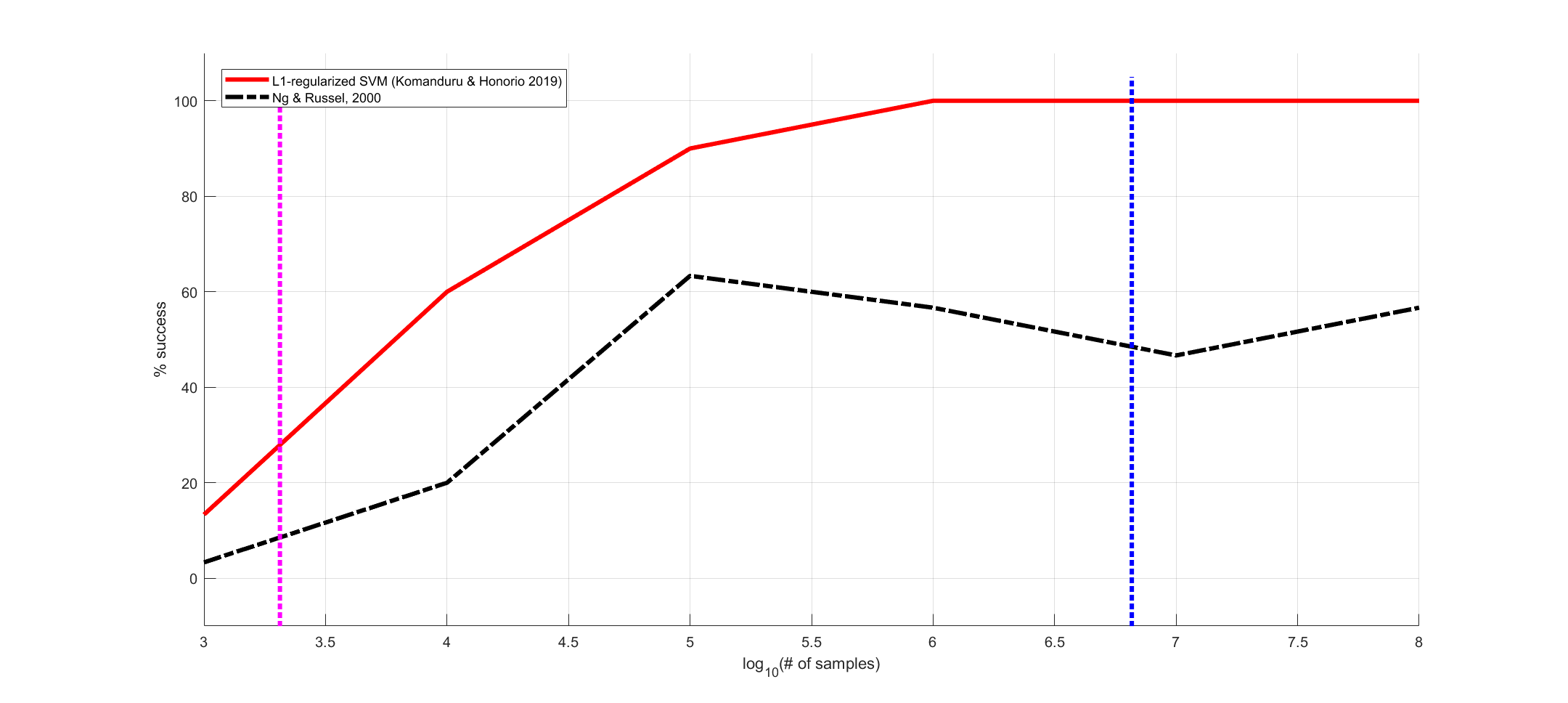}

	\caption{ Empirical probability of success versus log number of samples for an IRL problem with $n=7$ states, $k=7$ actions, $\gamma  = 0.1$ and $\beta \approx 0.0056$) using the L1-regularized SVM \cite{komanduru2019correctness} and the method of \cite{IRLNg}. The vertical magenta line represents the lower bound sample complexity from Corollary \ref{cor:fano3}. The vertical blue line represents the sample complexity upper bound from \cite{komanduru2019correctness}.}\label{fig:exp55}
\end{figure*}

 \section{Simulated Experiments}\label{sec:exp}

We apply our results to the simulated experiment cases performed in \cite{komanduru2019correctness} with a similar metric of percentage of trials where the estimated reward function generates the desired optimal strategy. The choice of this metric reflects the nature of our result: the correct identification of $\mathcal{F}$ from Theorem \ref{thm:fano1} is equivalent to the correct identification of the facet which contains the set of rewards that generate the optimal policy. This is in contrast to other methods that use closeness in the value function generated by the estimated reward function as their metric. We consider the scenario presented in their work (MDP with $n=7$ states, $k=7$ actions, $\gamma  = 0.1$ and $\beta \approx 0.0032$) using the L1-regularized SVM \cite{komanduru2019correctness}, the method of \cite{IRLNg}, Multiplicative Weights for Apprenticeship Learning from \cite{syed2008apprenticeship}, Bayesian IRL with Laplacian prior from \cite{RamaBayes} and Gaussian Process IRL from \cite{levine2011nonlinear}. The results are presented in Figure \ref{fig:exp77}. We also consider another case with  $n=5$ states, $k=5$ actions, $\gamma  = 0.1$ and $\beta \approx 0.0056$ similar to the case presented in \cite{komanduru2019correctness}. The results for this case can be seen in Figure \ref{fig:exp55}.

 In both cases, we observe that the performance of all the methods tested is abysmal ($<50\%$ success) when the number samples is below or close to our predicted lower bound. The performance only starts to improve in various methods when the samples are well above the bound we present. This visibly supports our sample complexity lower bound of $O(\frac{\log n}{n^2 \beta^2})$

\FloatBarrier
 
\bibliography{AbiBib2019}
\bibliographystyle{plain}
\clearpage
\appendix
\renewcommand\thefigure{\thesection.\arabic{figure}}
\setcounter{figure}{0}
\section{Omitted Proofs}
\subsection{Proof of Lemma \ref{lem:klcol}}
\begin{lemma}
	Let $P(i)$ and $Q(j)$ be the columns $i$ and $j$ of the transition probability matrices of such problem-reward pairs as described above constructed using Equation \ref{eq:constr}. Then 
	$$
	D(P(i)||Q(j))\le  \frac{2\varepsilon^2 n}{1-n\varepsilon}
	$$ 
\end{lemma}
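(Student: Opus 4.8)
The first step is to record what the construction of Equation~\ref{eq:constr} forces about $P(i)$ and $Q(j)$: each is a probability distribution on the $n$ states that lies in the closed Euclidean ball of radius $\varepsilon$ about the uniform distribution $u=(\tfrac1n,\dots,\tfrac1n)$. Indeed, for each row index $\ell$ the construction sets $P_{a_1}(\ell)-P_{a_2^i}(\ell)$ equal to the rotation $\Pi^{T}$ applied to $p^{i}_{\ell}$ (with $p^i_1$ reused for $\ell=n$) padded by a zero coordinate; since $\Pi$ is a rotation and appending a zero preserves the $\ell_2$-norm, $\lVert P_{a_1}(\ell)-P_{a_2^i}(\ell)\rVert_2=\lVert p^i_\ell\rVert_2=\varepsilon$, and as $P_{a_1}(\ell)=u$ this is exactly the statement $\lVert P(i)-u\rVert_2\le\varepsilon$; the same holds for $Q(j)$ (and trivially so for any row of $P_{a_1}$ itself).

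From this I extract two elementary facts. First, since $\lVert\cdot\rVert_\infty\le\lVert\cdot\rVert_2$, every coordinate obeys $Q(j)_k\ge \tfrac1n-\varepsilon=\tfrac{1-n\varepsilon}{n}$; under the standing assumption $\varepsilon<\tfrac1n$ used in the proof of Theorem~\ref{thm:kltraj}, this is strictly positive, so $P(i)$ is absolutely continuous with respect to $Q(j)$ and $D(P(i)\,\|\,Q(j))$ is finite. Second, by the triangle inequality $\lVert P(i)-Q(j)\rVert_2\le\lVert P(i)-u\rVert_2+\lVert u-Q(j)\rVert_2\le 2\varepsilon$, so $\sum_k\bigl(P(i)_k-Q(j)_k\bigr)^2\le 4\varepsilon^2$.

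The remaining step is to feed these into the Euclidean--information-theory bound of Equation~(3) of \cite{borade2008euclidean}, which in this near-uniform regime controls $D$ by (half of) the $\chi^2$-divergence: $D(P(i)\,\|\,Q(j))\le \tfrac12\sum_k (P(i)_k-Q(j)_k)^2/Q(j)_k$. Bounding each denominator below by $\tfrac{1-n\varepsilon}{n}$ and inserting the norm estimate above,
$$
D(P(i)\,\|\,Q(j))\ \le\ \frac12\cdot\frac{n}{1-n\varepsilon}\sum_k\bigl(P(i)_k-Q(j)_k\bigr)^2\ \le\ \frac12\cdot\frac{n}{1-n\varepsilon}\cdot 4\varepsilon^2\ =\ \frac{2\varepsilon^2 n}{1-n\varepsilon},
$$
which is the claimed bound.

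\textbf{Where the work is.} The arithmetic is trivial; the only point needing care is lining up the hypotheses of the estimate from \cite{borade2008euclidean}. That bound is local --- $D$ dominated by (half of) $\chi^2$ --- so I would want to check that the condition $\varepsilon<1/n$, i.e.\ all coordinates of the distributions being uniformly bounded away from $0$, is exactly the regime for which it is stated, and cite it accordingly. A minor bookkeeping point is that $P(i)$ and $Q(j)$ should be read as the one-step conditional distributions (the rows of the transition matrices), since those are the objects that are genuine probability vectors and that the chain-rule decomposition invoked in Theorem~\ref{thm:kltraj} uses.
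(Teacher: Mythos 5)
Your proof follows essentially the same route as the paper's: both invoke Equation~(3) of \cite{borade2008euclidean} to dominate $D$ by half a weighted squared Euclidean norm, bound the numerator by $\lVert P(i)-Q(j)\rVert_2\le 2\varepsilon$ via the $\varepsilon$-ball around the uniform distribution, and bound the denominator below by $\tfrac1n-\varepsilon$. The only (immaterial) difference is that you normalize by the coordinates of $Q(j)$ where the paper normalizes by those of $P(i)$; since both distributions satisfy the same coordinatewise lower bound, the resulting estimate is identical, and your extra care in justifying the ball containment and absolute continuity only makes the argument more complete.
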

\begin{proof}
	Using Equation (3) of \cite{borade2008euclidean}, if $Q(j) = P(i) + J$, then we have 
	$$
	D(P(i)||Q(j)) \le \frac{1}{2} ||J||^2_{P(i)} 
	$$
	where
	$$
	||J||^2_{P(i)} =  \sum_{j=1}^n \frac{J_j^2}{P_j(i)}
	$$
	Since both $P(i)$ and $Q(j)$ lie in the ball of radius $\varepsilon$ around $\begin{bmatrix}
	\frac{1}{n} & \frac{1}{n} & \ldots \frac{1}{n} 
	\end{bmatrix}$,	we have $\max ||J||_2 = 2\varepsilon$ and $\min P_i(j) = \frac{1}{n}- \varepsilon $ and thus
	
	$$
	||J||^2_P \le  \frac{\max ||J||_2^2}{\min P_i(j)} = \frac{4\varepsilon^2}{1/n - \varepsilon} 
	$$
	
	$$
	\implies D(P(i)||Q(j)) \le \frac{2 \varepsilon^2 n}{1-n\varepsilon}
	$$

\end{proof}

\subsection{Proof of Corollary \ref{cor:fano2}}
\begin{corollary}
	Consider the set of constructed problem reward pairs $F = \left\lbrace\mathcal{F}^i, R^i\right\rbrace$ constructed as described above. Let $\mathcal{F}$ be uniform on $F$. Let $n$ be large and consider the case 
	$$
	\frac{1}{\sqrt{2n(n-1)}}=\varepsilon = \sqrt{n-2} \beta
	$$
	Let $Z$ represent the $m$ sample trajectory generated from $\mathcal{F}$ and let $\hat{\mathcal{F}}$ be an estimator of $\mathcal{F}$ from $Z$ with
	$$
	m \le (n-1) (0.5 \log n -\log 2)\left(1-\sqrt{\frac{n}{2(n-1)}}\right)+1
	$$
	Then for any Markov chain $\mathcal{F}\rightarrow Z\rightarrow \hat{\mathcal{F}}$, we have
	$$
	\mathbb{P}(\hat{\mathcal{F}}\ne\mathcal{F})\ge 0.5
	$$
	
\end{corollary}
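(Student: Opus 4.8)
The plan is to specialize the Fano bound of Theorem~\ref{thm:fano1} to the single regime $\varepsilon = \frac{1}{\sqrt{2n(n-1)}} = \sqrt{n-2}\,\beta$, and then verify that the prescribed budget on $m$ keeps the right-hand side of Fano's inequality at $1/2$. Concretely, I would start from Fano's inequality in the form used inside the proof of Theorem~\ref{thm:fano1}, namely $\mathbb{P}(\hat{\mathcal{F}}\neq\mathcal{F}) \ge 1 - \frac{I(\mathcal{F};Z)+\log 2}{\log|F|}$, and control the two quantities $|F|$ and $I(\mathcal{F};Z)$ separately under the stated choice of parameters.

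First, the geometry. The equality $\varepsilon = \sqrt{n-2}\,\beta$ is exactly the lower endpoint of the admissible interval of Lemma~\ref{lem:epslowbnd}, and by that lemma it forces the spherical code to be the regular $(n-1)$-simplex, with $N=n$ vertices and $\cos\theta = -\frac{1}{n-1}$. Hence the simplicial polytope of the construction is the simplex itself, which has exactly $n$ facets, so $|F| = |\mathcal{Y}| = n$ and $\log|F| = \log n$. I would point out explicitly that the closed-form bound $\eta$ appearing in Theorem~\ref{thm:fano1} is vacuous in this corner case (its leading term $\varepsilon^2 - n(n-2)\beta^2$ becomes negative), so one simply substitutes the exact facet count $n$.

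Second, the information term. Plugging $\varepsilon = \frac{1}{\sqrt{2n(n-1)}}$ gives $2\varepsilon^2 n = \frac{1}{n-1}$ and $n\varepsilon = \sqrt{\frac{n}{2(n-1)}}$, which is strictly below $1$ for $n$ large (in fact for every $n\ge 3$), so Theorem~\ref{thm:kltraj} applies and yields
$$I(\mathcal{F};Z) \le (m-1)\,\frac{2\varepsilon^2 n}{1-n\varepsilon} = \frac{m-1}{(n-1)\bigl(1-\sqrt{n/(2(n-1))}\bigr)}.$$
By the hypothesis $m-1 \le (n-1)(0.5\log n - \log 2)\bigl(1-\sqrt{n/(2(n-1))}\bigr)$, this is at most $0.5\log n - \log 2$, so $I(\mathcal{F};Z) + \log 2 \le 0.5\log n = \tfrac12\log|F|$. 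Substituting into Fano's inequality gives $\mathbb{P}(\hat{\mathcal{F}}\neq\mathcal{F}) \ge 1 - \tfrac12 = 0.5$.

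The only step requiring care is the geometric one: recognizing that at the extreme value $\varepsilon = \sqrt{n-2}\,\beta$ the ensemble degenerates to the $(n-1)$-simplex, so that $|F|$ equals exactly $n$ rather than the (here meaningless) expression of Theorem~\ref{thm:facet_bound}. Once that identification is in place, the remainder is the arithmetic above, which has been arranged so the KL budget matches $\tfrac12\log n$ precisely after subtracting the $\log 2$ slack inherent to Fano's inequality; tracking the $1-n\varepsilon$ denominator through the chosen bound on $m$ is then routine.
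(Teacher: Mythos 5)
Your proof is correct and follows essentially the same route as the paper's: identify the $\varepsilon=\sqrt{n-2}\,\beta$ case with the regular $(n-1)$-simplex so that $|F|=n$, then substitute $\varepsilon=\frac{1}{\sqrt{2n(n-1)}}$ into the Fano bound of Theorem~\ref{thm:fano1} and verify the arithmetic under the stated budget on $m$. Your explicit remark that the closed-form $\eta$ of Theorem~\ref{thm:facet_bound} becomes vacuous at this endpoint (since $\varepsilon^2-n(n-2)\beta^2<0$) is a useful clarification the paper leaves implicit, but it does not change the argument.
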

\begin{proof}
	From Lemma \ref{lem:epslowbnd}, we know that the case $\varepsilon = \sqrt{n-2} \beta$ corresponds to the spherical code being an $n-1$-simplex which has $n$ facets. Since the number of problems is the number of facets of the convex polytope, substituting $|F| = n$ in the proof of Theorem \ref{thm:fano1} along with $\varepsilon = \frac{1}{\sqrt{2n(n-1)}}$ gives us
	$$
	\begin{aligned}
	&\mathbb{P}(\hat{\mathcal{F}}\ne\mathcal{F})\ge 1- \frac{(m-1)\frac{2 \varepsilon^2 n}{1-n\varepsilon} + \log 2}{\log |F|}\\
	&= 1- \frac{(n-1) (0.5 \log n -\log 2)\frac{1}{n-1} + \log 2}{\log n}\\
	&= 1- 0.5 = 0.5
	\end{aligned}
	$$

\end{proof}

\subsection{Proof of Corollary \ref{cor:fano3}}
\begin{corollary}
	Consider the set of constructed problem reward pairs $F = \left\lbrace\mathcal{F}^i, R^i\right\rbrace$ constructed as described above. Let $\mathcal{F}$ be uniform on $F$. Let $n$ be large and consider the case 
	$$
	\frac{1}{\sqrt{2n(n-1)}}\ge\varepsilon = \sqrt{n-2} \beta
	$$
	Let $Z$ represent the $m$ sample trajectory generated from $\mathcal{F}$ and let $\hat{\mathcal{F}}$ be an estimator of $\mathcal{F}$ from $Z$ with
	$$
	m \le  \frac{(0.5 \log n -\log 2)}{2(n-2)n\beta^2}\left(1-n\sqrt{n-2}\beta\right)+1
	$$
	Then for any Markov chain $\mathcal{F}\rightarrow Z\rightarrow \hat{\mathcal{F}}$, we have
	$$
	\mathbb{P}(\hat{\mathcal{F}}\ne\mathcal{F})\ge 0.5
	$$
	
\end{corollary}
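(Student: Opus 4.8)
The plan is to repeat the Fano computation of Corollary~\ref{cor:fano2} almost verbatim, the only change being that $\beta$ is now kept as a free parameter instead of being eliminated in favour of the maximal value of $\varepsilon$. The first step is to pin down $|F|$. By Lemma~\ref{lem:epslowbnd} the equality $\varepsilon=\sqrt{n-2}\,\beta$ is exactly the extremal case of the bound there, in which $\cos\theta=-\tfrac1{n-1}$ and the spherical code degenerates to a regular $(n-1)$-simplex; an $(n-1)$-simplex has precisely $n$ facets, and since the constructed problems are in bijection with the facets of the polytope this gives $|F|=n$. The auxiliary hypothesis $\varepsilon\le\frac1{\sqrt{2n(n-1)}}$ only serves to keep $\varepsilon$ inside the inscribed ball of the probability simplex; I would note in passing that it forces $n\varepsilon\le\sqrt{n/(2(n-1))}<1$ for $n>2$, so that $\varepsilon<1/n$ (hence the KL estimates of Lemma~\ref{lem:klcol} and Theorem~\ref{thm:kltraj} are in force) and $1-n\varepsilon>0$.

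With $|F|=n$, I would substitute into the Fano bound established in the proof of Theorem~\ref{thm:fano1}, namely
$$
\mathbb{P}(\hat{\mathcal{F}}\ne\mathcal{F})\ge 1-\frac{(m-1)\frac{2\varepsilon^2 n}{1-n\varepsilon}+\log 2}{\log n}.
$$
For the right-hand side to be at least $0.5$ it suffices that the subtracted fraction be at most $\tfrac12$, i.e. $(m-1)\frac{2\varepsilon^2 n}{1-n\varepsilon}\le 0.5\log n-\log 2$, which rearranges (using $1-n\varepsilon>0$) to
$$
m\le 1+\frac{(0.5\log n-\log 2)\,(1-n\varepsilon)}{2\varepsilon^2 n}.
$$
Finally I would insert $\varepsilon^2=(n-2)\beta^2$ and $n\varepsilon=n\sqrt{n-2}\,\beta$, which turns the right-hand side into $1+\frac{(0.5\log n-\log 2)(1-n\sqrt{n-2}\,\beta)}{2n(n-2)\beta^2}$, precisely the range of $m$ claimed in the statement; under this range the two displays above combine to give $\mathbb{P}(\hat{\mathcal{F}}\ne\mathcal{F})\ge0.5$.

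Every step past the identification $|F|=n$ is elementary algebra, so there is no real obstacle; the one point that deserves care is the justification that $\varepsilon=\sqrt{n-2}\,\beta$ genuinely forces the $(n-1)$-simplex configuration (and hence $|F|=n$ rather than merely $|F|\ge n$), together with checking $1-n\varepsilon>0$ over the stated window for $\varepsilon$ so that the rearrangement is valid. Keeping only the leading term of the bound on $m$ for large $n$ (where $n(n-2)\sim n^2$ and, for $\beta$ small, $1-n\sqrt{n-2}\,\beta\to1$) then yields the advertised sample-complexity lower bound of order $O\!\big(\tfrac{\log n}{n^2\beta^2}\big)$.
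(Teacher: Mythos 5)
Your proposal is correct and follows essentially the same route as the paper: identify the $\varepsilon=\sqrt{n-2}\,\beta$ case with the $(n-1)$-simplex so that $|F|=n$, substitute into the Fano bound from Theorem \ref{thm:fano1}, and verify that the stated range of $m$ makes the subtracted fraction at most $\tfrac12$. The only difference is cosmetic — you rearrange for $m$ before substituting $\varepsilon$, whereas the paper plugs the extremal $m$ in directly — and your added remarks on $1-n\varepsilon>0$ and on $|F|=n$ versus $|F|\ge n$ are sensible points of care that the paper glosses over.
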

\begin{proof}
	From Lemma \ref{lem:epslowbnd}, we know that the case $\varepsilon = \sqrt{n-2} \beta$ corresponds to the spherical code being an $n-1$-simplex which has $n$ facets. Since the number of problems is the number of facets of the convex polytope, substituting $|F| = n$ in the proof of Theorem \ref{thm:fano1} along with $\varepsilon = \sqrt{n-2}\beta$ gives us
	$$
	\begin{aligned}
	&\mathbb{P}(\hat{\mathcal{F}}\ne\mathcal{F})\ge 1- \frac{(m-1)\frac{2  n(n-2)\beta^2}{1-n\sqrt{n-2}\beta} + \log 2}{\log |F|}\\
	&= 1- \frac{ (0.5 \log n -\log 2) + \log 2}{\log n}\\
	&= 1- 0.5 = 0.5
	\end{aligned}
	$$

\end{proof}

\end{document}